\newtheorem{theorem}{Theorem}
\newtheorem{lemma}[theorem]{Lemma}
\newtheorem{proposition}[theorem]{Proposition}
\theoremstyle{definition}
\newtheorem{assumption}{Assumption}
\newtheorem{definition}{Definition}
\newcommand{\E}{\mathbb{E}}
\newcommand{\inner}[1]{\langle #1 \rangle}
\newcommand{\XCal}{\mathscr{X}}
\newcommand{\YCal}{\mathscr{Y}}
\newcommand{\SCal}{\mathscr{S}}
\newcommand{\PCal}{\mathscr{P}}
\newcommand{\Real}{\mathbb{R}}
\newcommand{\algname}{{\sc AdvShift}}
\newcommand{\numLabels}{L}
\newcommand{\pTr}{p_{\mathrm{tr}}}
\newcommand{\pTe}{p_{\mathrm{te}}}
\newcommand{\pEmp}{p_{\mathrm{emp}}}
\newcommand{\pAdv}{\pi}
\newcommand{\pAdvVec}{\pi}
\title{Coping with label shift via distributionally robust optimisation}
\author{
\hspace{-0.1cm}Jingzhao Zhang \\
Massachusetts Institute of Technology \\
\texttt{jzhzhang@mit.edu}
\AND
Aditya Krishna Menon \& Andreas Veit \& Srinadh Bhojanapalli \& Sanjiv Kumar \\
Google Research \\
\texttt{\{adityakmenon, aveit, bsrinadh, sanjivk\}@mit.edu}
\AND
Suvrit Sra \\
Massachusetts Institute of Technology \\
\texttt{suvrit@mit.edu}
}
\begin{document}

\maketitle

\begin{abstract}

The label shift problem refers to the supervised learning setting where the train and test label distributions do not match. Existing work addressing label shift usually assumes access to an \emph{unlabelled} test sample. This sample may be used to estimate the test label distribution, and to then train a suitably re-weighted classifier. While approaches using this idea have proven effective, their scope is limited as it is not always feasible to access the target domain; further, they require repeated retraining if the model is to be deployed in \emph{multiple} test environments. Can one instead learn a \emph{single} classifier that is robust to arbitrary label shifts from a broad family? In this paper, we answer this question by proposing a model that minimises an objective based on distributionally robust optimisation (DRO). %
We then design and analyse a gradient descent-proximal mirror ascent algorithm tailored for large-scale problems to optimise the proposed objective. %
Finally, through experiments on CIFAR-100 and ImageNet, we show that our technique can significantly improve performance over a number of baselines in settings where label shift is present.

\end{abstract}

\section{Introduction}
\vspace*{-8pt}
Classical supervised learning involves learning a model from a training distribution that generalises well on test samples drawn from the \emph{same} distribution. While the assumption of identical train and test distributions has given rise to useful methods, it is often violated in many practical settings~\citep{Kouw:2018}. The \emph{label shift} problem is one such important setting, wherein the training distribution over the labels does not reflect what is observed during testing~\citep{Saerens:2002}.
For example, consider the problem of object detection in self-driving cars:
a model trained in one city may see a vastly different distribution of pedestrians and cars when deployed in a different city. 
Such shifts in label distribution can significantly degrade model performance. As a concrete example,  consider the performance of a ResNet-50 model on ImageNet. 
While the overall error rate is $\sim24\%$, Figure~\ref{fig:imagenet_error_dist} reveals that certain classes suffer an error as high as $\sim80\%$.
Consequently, a label shift 
that 
increases the prevalence of the more erroneous classes in the test set
can significantly degrade performance.

Most existing work on label shift operates in the setting where one has an \emph{unlabelled} test sample that can be used to estimate the shifted label probabilities~\citep{du2014semi,Lipton:2018,Azizzadenesheli:2018}. Subsequently, one can retrain a classifier using these probabilities in place of the training label probabilities. While such techniques have proven effective, it is not always feasible to access an unlabelled set. Further, one may wish to deploy a learned model in \emph{multiple} test environments, each one of which has its own label distribution. 
For example, the label distribution for a vehicle detection camera may change continuously while driving across the city. 
Instead of simply deploying a separate model for each scenario, deploying a \emph{single} model that is robust to shifts may be more efficient and practical. Hence, we address the following question in this work: \emph{can we learn a single classifier that is robust to a family of arbitrary
shifts?}

\begin{figure}[!t]
    \centering
\vspace{-0.5cm}
    {\includegraphics[scale=0.2]{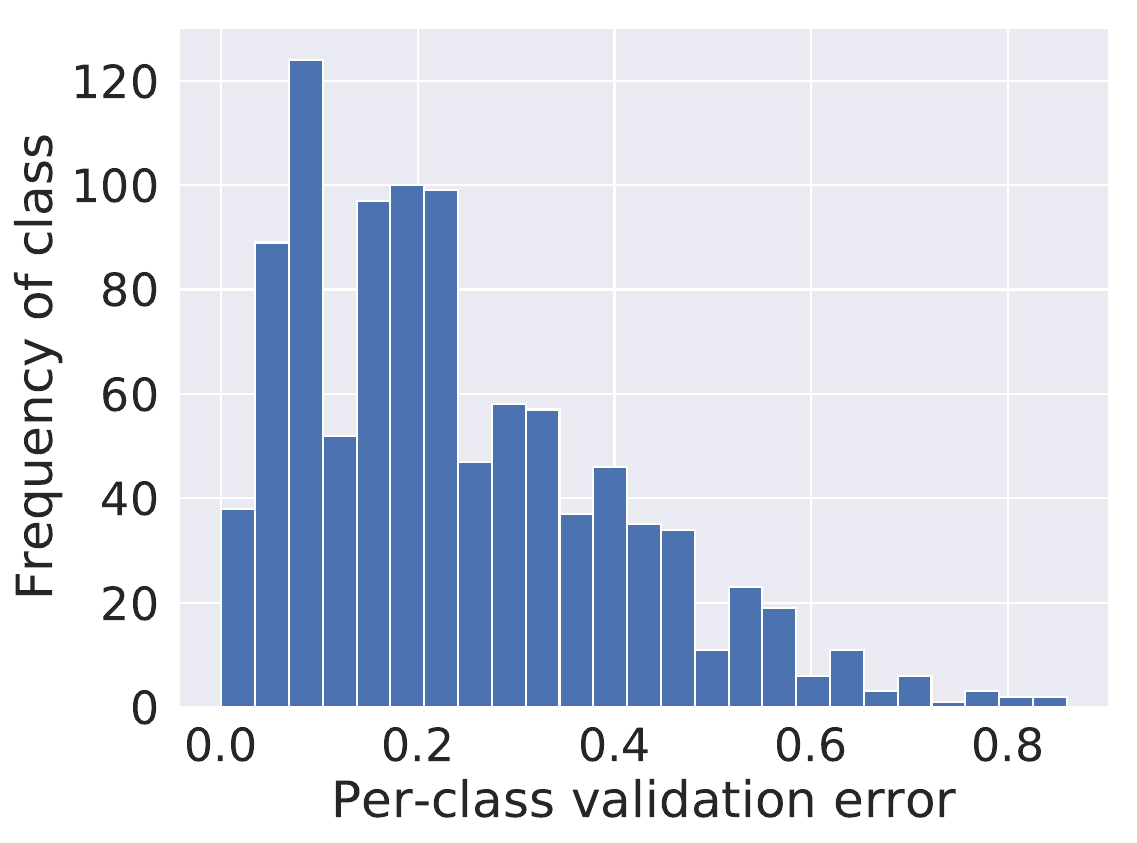}}
    {\includegraphics[scale=0.2]{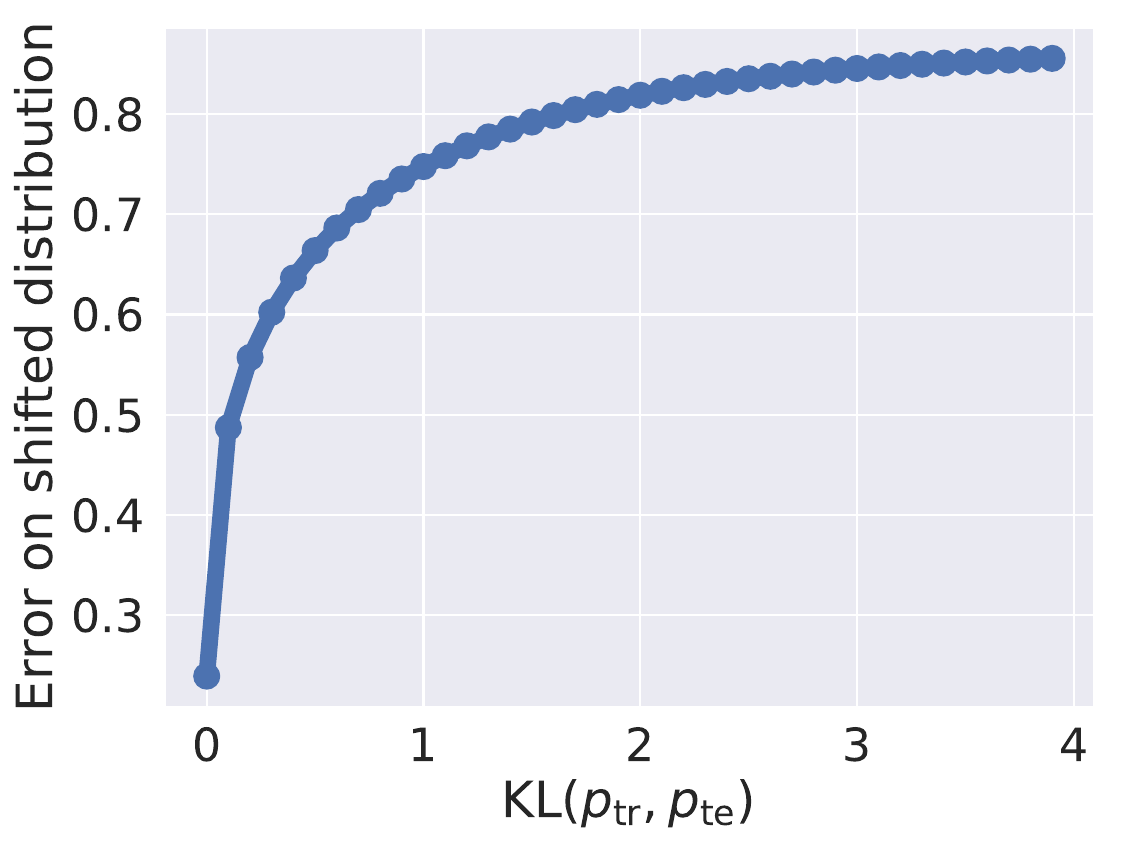}}

    \caption{\small Distribution of per-class test errors of a ResNet-50 on ImageNet (left).
    While the average error rate is $\sim24\%$, some classes achieve an error as high as $\sim80\%$.
    An adversary can thus significantly degrade test performance (right) by choosing $\pTe( y )$ with more weight on these classes.
    }\vspace{-0.2cm}
    \label{fig:imagenet_error_dist}
\end{figure}

We answer the above question by modeling label shift via distributionally robust optimisation (DRO)~\citep{Shapiro:2014,Rahimian:2019}. DRO offers a convenient way of coping with distribution shift, and have lead to successful applications (e.g. \cite{faury2020distributionally, qi2020online}). Intuitively, by seeking a model that performs well on \emph{all} label distributions that are ``close'' to the training data label distribution, this task can be cast as a game between the learner and an \emph{adversary}, with the latter allowed to pick label distributions that maximise the learner's loss.
We remark that while adversarial perspectives have informed popular paradigms such as GANs, these pursue fundamentally different objectives from DRO (see Appendix~\ref{sec:other-related} for details).

Although several previous works have explored DRO for tackling the problem of \emph{example} shift (e.g., adversarial examples)~\citep{Namkoong:2016,Namkoong:2017,Duchi:2018}, an application of DRO to the {label} shift setting poses several challenges: 
(a) updating the adversary's distribution na\"{i}vely requires solving a nontrivial convex optimisation subproblem with limited tractability, and also needs careful parameter tuning; and (b) na\"{i}vely estimating gradients under the adversarial distribution on a randomly sampled minibatch can lead to unstable behaviour (see~\S\ref{sec:unbiased}). We overcome these challenges by proposing the first algorithm that successfully optimises a DRO objective 
for label shift
on a large scale dataset (i.e., ImageNet).
Our objective encourages robustness to arbitrary label distribution shifts within a \emph{KL-divergence ball} of the empirical label distribution. Importantly, we show that this choice of robustness set admits an \emph{efficient} and \emph{stable} update step.

\textbf{Summary of contributions}
\begin{enumerate}[label=(\arabic*),itemsep=0pt,topsep=0pt,leftmargin=16pt]
    \item We design a gradient descent-proximal mirror ascent algorithm tailored for optimising large-scale problems with minimal computational overhead, 
    and prove its theoretical convergence.
    \item With the proposed algorithm, we implement a practical procedure to successfully optimise the robust objective on ImageNet scale for the label shift application.
    \item We show through experiments on ImageNet and CIFAR-100 that our technique significantly improves over baselines when the label distribution is adversarially varied.
\end{enumerate}

\section{Background and problem formulation}
\label{sec:background}
\vspace*{-6pt}
In this section we formalise the label shift problem and motivate its formulation {as an adversarial optimisation problem}. 
Consider a multiclass classification problem  with distribution $\pTr$ over instances $\XCal$ and labels $\YCal = [ L ]$.
The goal is to learn a classifier $h_{\theta} \colon \XCal \to \YCal$ parameterised by $\theta \in \Theta$,
with the aim of ensuring good predictive performance on future samples drawn from $\pTr$.
More formally, the goal is to minimise the objective $ \min_\theta \E_{(x, y) \sim \pTr}[ \ell(x, y, \theta)], $ where $\ell \colon \XCal \times \YCal \times \Theta \to \Real_+$ is a loss function.
In practice,
we only have access to a finite sample $\SCal = \{ ( x_i, y_i ) \}_{i = 1}^n \sim \pTr^n$, which motivates us to use the empirical distribution $\pEmp(x, y) = \frac{1}{n} \sum_{i=1}^n \mathds{1}( x = x_i, y=y_i )$ in place of $\pTr$. Doing so, we arrive at the objective of minimising the empirical risk: 
\begin{align}\label{eq:erm}
    \min_\theta\ \E_{\pEmp} [ \ell(x, y, \theta)] := \frac{1}{n} \sum\nolimits_{i=1}^n \ell(x_i, y_i, \theta).
\end{align}
\vspace{-0.5cm}

\begin{table}[!t]
    \centering
    \renewcommand{\arraystretch}{1.25}
\vspace{-0.5cm}
    \small
    \begin{tabular}{@{}p{1.95in}p{3.4in}@{}}
        \toprule
        \textbf{Label distribution} & \textbf{Reference} \\
        \toprule
        Train distribution & Standard ERM \\
        Specified a-priori (e.g., balanced) & \citep{Elkan:2001,Xie:1989,Cao:2019} \\        
        Estimated test distribution & \citep{du2014semi,Lipton:2018,Azizzadenesheli:2018,Garg:2020,combes2020domain} \\
        Worst-performing class & \citep{Hashimoto:2018,Mohri:2019,Sagawa:2020} \\
        Worst $k$-performing classes & \citep{fan2017learning,Williamson:2019,Curi:2019,Duchi:2020} \\
        \midrule
        Adversarial shifts within KL-divergence & This paper \\
        \bottomrule
    \end{tabular}
    \caption{Summary of approaches to learning with a modified label distribution.}
    \label{tbl:summary}
    \vspace{-\baselineskip}
\end{table}

The assumption underlying the above formulation is that test samples are drawn from the same distribution $\pTr$ that is used during training. However, this assumption is violated in many practical settings. The problem of learning from a training distribution $\pTr$, while attempting to perform well on a test distribution $\pTe \neq \pTr$ is referred to as \emph{domain adaptation}~\citep{ben2007analysis}. 
In the special case of \emph{label shift}, 
one posits that $\pTe( x \mid y ) = \pTr( x \mid y )$, but 
the label distribution $\pTe( y ) \neq \pTr( y )$~\citep{Saerens:2002};
i.e., the test distribution satisfies $\pTe(x,y) = \pTe(y) \pTr(x \mid y)$. The label shift problem admits the following three distinct settings (see Table~\ref{tbl:summary} for a summary):

\textbf{(1) Fixed label shift}.
Here, one assumes \emph{a-priori} knowledge of $\pTe( y )$.
One may then adjust the outputs of a probabilistic classifier 
post-hoc
to improve test performance~\citep{Elkan:2001}.
Even when the precise distribution is unknown,
it is common to posit a uniform $\pTe( y )$.
Minimising the resulting \emph{balanced error} has been the subject of a large body of work~\citep{He:2009}, with recent developments including~\citet{Cui:2019,Cao:2019,Kang:2020,guo2020ltf}.

\textbf{(2) Estimated label shift}.
Here, we assume that $\pTe( y )$ is unknown, but that we have access to an unlabelled test sample.
This sample may be used to estimate $\pTe( y )$, e.g., via kernel mean-matching~\citep{Kun:2013},
minimisation of a suitable KL divergence~\citep{du2014semi}, or using black-box classifier outputs~\citep{Lipton:2018,Azizzadenesheli:2018,Garg:2020}.
One may then use these estimates to minimise a suitably re-weighted empirical risk.

\textbf{(3) Adversarial label shift}.
Here, we assume that $\pTe( y )$ is unknown, and guard against a suitably defined worst-case choice.
Observe that an extreme case of label shift involves placing all probability mass on a single $y^* \in \YCal$.
This choice can be problematic, as~\eqref{eq:erm} may be rewritten as
$$ \min_{\theta} \sum_{y \in [L]} \pEmp( y ) \cdot \biggl\{ \frac{1}{n_y} \sum_{i \colon y_i = y} \ell(x_i, y_i, \theta) \biggr\}, $$
where $n_y$ is the number of training samples with label $y$.
The empirical risk is thus a weighted average of the per-class losses. %
Observe that if some $y^* \in \YCal$ has a large per-class loss, then an adversary could degrade performance by choosing a $\pTe$ with $\pTe( y^* )$ being large.
One means of guarding against such adversarial label shifts is to minimise the \emph{minimax risk}~\citep{Rodriguez:2007,Davenport:2010,Hashimoto:2018,Mohri:2019,Sagawa:2020}
\begin{equation}
    \label{eqn:agnostic-risk}
    \min_{\theta} \max_{\pAdv \in \Delta^L} \sum_{y \in [L]} \pAdv( {y} ) \cdot \biggl\{ \frac{1}{n_y} \sum_{i \colon y_i = y} \ell(x_i, y_i, \theta) \biggr\},
\end{equation}
where $\Delta^L$ denotes the simplex. In~\eqref{eqn:agnostic-risk}, we combine the per-label risks according to the \emph{worst-case} label distribution.
In practice, focusing on the worst-case label distribution may be overly pessimistic.
One may temper this by instead constraining the label distribution.
A popular choice is to enforce that $\| \pAdv \|_{\infty} \leq \frac{1}{k}$ for suitable $k$,
which corresponds to minimising the \emph{average of the top-$k$ largest} per-class losses for integer $k$~\citep{Williamson:2019,Curi:2019,Duchi:2020}. 

We focus on the adversarial label shift setting,
as it 
meets the desiderata of training a single model that is robust to multiple label distributions,
and not requiring access to test samples.
Adversarial robustness has been widely studied (see Appendix~\ref{sec:other-related} for more related work), but its application to label shift is much less explored. Amongst techniques in this area,~\citet{Mohri:2019, Sagawa:2020} are most closely related to our work. These works
optimise the worst-case loss over subgroups induced by the labels. However, both works consider settings with a relatively small ($\leq 10$) number of subgroups; the resultant algorithms face many challenges when trained with many labels (see Section~\ref{sec:experiment}). 
We now detail how a suitably constrained DRO formulation, coupled with optimisation choices, can overcome this limitation.

\section{Algorithm: distributionally robust KL-divergence minimisation}
\vspace*{-6pt}
To address the adversarial label shift problem, we propose to replace
the empirical risk~\eqref{eq:erm} with
\begin{align}
    \label{eq:dro}
    \min_\theta\ \max_{\pAdv \in \PCal}\ \E_{\pAdv}[ \ell(x, y, \theta)],  \quad      
    \PCal := \{ \pAdv \in \Delta^L \mid  d(\pAdvVec, \pEmp) \le r \},
\end{align}
where $\PCal$ is an \emph{uncertainty set} containing perturbations of the empirical distribution $\pEmp$.
This is an instance of \emph{distributionally robust optimisation} (\emph{DRO})~\citep{Shapiro:2014},
a framework where one minimises the \emph{worst-case} expected loss over a family of distributions.
In this work, we instantiate DRO with $\PCal$ being a parameterised family of distributions with varying marginal label distributions in KL-divergence, i.e.,
$
    d(p, q) = {\mathbb{E}}_{y \sim q}\left[  -\log {p(y)}/{q(y)} \right].
$
(We use this divergence, as opposed to a generic $f$-divergence,
as it affords closed-form updates; see~\S\ref{sec:updates}.)
Solving~\eqref{eq:dro} thus directly addresses adversarial label shift, as it ensures our model performs well on arbitrary label distributions from $\PCal$.
Observe further that the existing minimax risk~\eqref{eqn:agnostic-risk} is a special case of~\eqref{eq:dro} with $r = +\infty$.

Having stated our learning objective, we now turn to the issue of how to optimise it.
One natural thought is to leverage strategies pursued in the literature on \emph{example-level DRO} using $f$-divergences.
For example,~\citet{Namkoong:2016} propose an algorithm that alternately 
performs iterative gradient updates for model parameters $\theta$ and adversarial distribution $\pi$, 
assuming access to projection oracles, 
and the ability to sample from the adversarial distribution.
However, there are challenges in applying such techniques on large-scale problems (e.g., ImageNet):
\begin{enumerate}[label=(\arabic*),leftmargin=16pt,topsep=0pt,itemsep=0pt]
    \item 
    directly sampling from 
    $\pi$ 
    is challenging in most data loading pipelines for ImageNet.

    \item projecting $\pi$ onto the feasible set $\mathscr{P}$ requires solving a constrained convex optimization problem at every iteration, which 
    can incur non-trivial overhead (see Appendix~\ref{sec:projection}).
\end{enumerate}
We now describe \algname{} (Algorithm~\ref{alg:fshift}),
our approach to solve these problems.
In a nutshell, we iteratively update model parameters $\theta$ and adversarial distributions $\pi$.
In the former, we update exactly as per ERM optimization (e.g., ADAM, SGD), which we denote as {\tt NNOpt} (neural network optimiser);
in the latter, we introduce a Lagrange multiplier to avoid projection.
Extra care is needed to 
obtain unbiased gradients and speed up adversarial convergence,
as we now detail.

\begin{algorithm}[!t] 
    \caption{\algname ($\theta_0, \gamma_c, \lambda, {\rm\tt NNOpt}, \pEmp, \eta_\pi$)}\label{alg:SclippedGD}
    
    \small
	\begin{algorithmic}[1]
	    \State Initialise adversary distribution as 
	    $\pAdvVec_{1} =(\tfrac{1}{\numLabels}, ..., \tfrac{1}{\numLabels})$. 
	    \For {$t = 1, \ldots, T$}
	    
	    \State Sample mini-batch of $b$ examples $\{ ( x_i, y_i ) \}_{i = 1}^b$.
	    
	    \State Evaluate stochastic gradient $g_\theta = \frac{1}{b} \sum_{i=1}^b \frac{\pAdvVec_t(y_i)}{\pEmp(y_i)} \cdot \nabla_\theta \ell(x_i, y_i, \theta_t) $
	    
	    \State Update neural network parameters $\theta_{t+1} = {\rm\tt NNOpt}(g_{\theta})$
	    
		\State Update Lagrangian variable $   \alpha = 0\  \text{if}\  r > \mathrm{KL}(\pAdvVec_{t}, \pEmp), \alpha = 2\gamma_c \lambda\  \text{if}\  r < \mathrm{KL}(\pAdvVec_{t}, \pEmp)$.
	    
	    \State Evaluate adversarial gradient $g_{\pi}(i) = \frac{1}{b}\sum_{j=1}^b \frac{\mathbbm{1}\{{y_j} = i\}}{\pEmp( i )} \cdot \nabla_\pi \ell(x_j, y_j, \theta_{t+1})$.
	    
	    \State Update adversarial distribution $\pAdvVec_{t+1} = (\pAdvVec_t \cdot \pEmp^\alpha)^{1/(1+\alpha)} \cdot  \exp{(\eta_{\pi}g_\pi)} / C$

		\EndFor
	\end{algorithmic}
	\label{alg:fshift}
\end{algorithm}

\subsection{Estimating the adversarial minibatch gradient}\label{sec:unbiased}

For a fixed $\pi \in \Delta^L$,
to estimate the parameter gradient $ \E_{\pAdvVec}[ \nabla_{\theta} \ell(x, y, \theta)]$
on a training sample $\SCal = \{ ( x_i, y_i ) \}_{i = 1}^n$,
we 
employ the importance weighting identity and 
write
\begin{align*}
     \E_{\pAdvVec}[ \nabla_{\theta} \ell(x, y, \theta)] = \E_{\pEmp}\left[ \frac{1}{\pEmp( y )} \cdot \nabla_{\theta} \ell(x, y, \theta) \right] = \frac{1}{n} \sum_i \tfrac{\pAdvVec(y_i)}{\pEmp(y_i)} \cdot \nabla_{\theta}  \ell(x_i, y_i, \theta).
\end{align*}
We may thus draw a minibatch as usual from $\SCal$, and apply suitable weighting to obtain unbiased gradient estimates.
A similar %
reweighting
is necessary to compute the adversary gradients $ \E_{\pAdvVec}[ \nabla_{\pi} \ell(x, y, \theta)]$.
Making the adversarial update efficient requires further effort, as we now discuss.

\subsection{Removing constraints by Lagrangian duality}

To efficiently update the adversary distribution $\pi$ in~\eqref{eq:dro},
we would like to avoid the cost of projecting onto $\PCal$.
To bypass this difficulty,
we make the following observation based on Lagrangian duality.
\begin{proposition} \label{thm:gammac}
Suppose $\ell$ is bounded, and $\pEmp$ is not on the boundary of the simplex.
Then, $\forall r > 0$, $\exists \gamma^* > 0$ such that
for every $\gamma_c \geq \gamma^*$,
the constrained objective is solvable in unconstrained form:
\begin{align*}
    \underset{\pAdvVec \in \Delta^{\numLabels},\ \mathrm{KL}(\pAdvVec, \pEmp) \le r}{\operatorname{argmax}} \, \E_{\pi}[ \ell(x, y, \theta)] =
    \underset{\pAdvVec \in \Delta^{\numLabels}}{\operatorname{argmax}} \, \E_{\pi}[ \ell(x, y, \theta)] + \min \{0, \gamma_c (r -\mathrm{KL}(\pAdvVec, \pEmp) ) \}.
\end{align*}
\end{proposition}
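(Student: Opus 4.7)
The plan is to recognize Proposition~\ref{thm:gammac} as an instance of the exact penalty theorem in convex optimization. For fixed $\theta$, the inner problem is the concave (indeed, linear in $\pAdvVec$) maximization of $f(\pAdvVec) \defeq \E_{\pAdvVec}[\ell(x,y,\theta)]$ subject to the convex constraints $\pAdvVec \in \Delta^L$ and $g(\pAdvVec) \defeq \mathrm{KL}(\pAdvVec,\pEmp) - r \le 0$. The added term $\min\{0,\gamma_c(r-\mathrm{KL}(\pAdvVec,\pEmp))\} = -\gamma_c \max\{0,g(\pAdvVec)\}$ is precisely the standard exact penalty for the KL constraint, so it suffices to choose $\gamma_c$ large enough that this penalty rules out infeasible maximizers.

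The first step is to invoke strong Lagrangian duality. Since $r>0$ and $\pEmp$ lies in the relative interior of $\Delta^L$, we have $g(\pEmp) = -r < 0$, so Slater's condition holds. Combined with compactness of $\Delta^L$, continuity of $f$ and of $g$ (finite because $\pEmp$ is interior), and boundedness of $\ell$ (which makes $f$ bounded), standard convex duality yields a primal maximizer $\pAdvVec^*$ and a finite dual optimum $\lambda^* \ge 0$ satisfying the KKT conditions
\begin{equation*}
    \pAdvVec^* \in \argmax_{\pAdvVec \in \Delta^L} \bigl[ f(\pAdvVec) - \lambda^* g(\pAdvVec) \bigr], \qquad \lambda^* g(\pAdvVec^*) = 0.
\end{equation*}

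Next, set $\gamma^* \defeq \lambda^* + 1 > 0$ and fix any $\gamma_c \ge \gamma^*$. I would prove the argmax equality by case analysis on an arbitrary $\pAdvVec \in \Delta^L$. If $g(\pAdvVec) \le 0$, the penalty vanishes, so the penalized objective equals $f(\pAdvVec) \le f(\pAdvVec^*)$, with equality at $\pAdvVec^*$ by complementary slackness. If $g(\pAdvVec) > 0$, then since $\gamma_c > \lambda^*$ and $g(\pAdvVec) > 0$,
\begin{equation*}
    f(\pAdvVec) - \gamma_c g(\pAdvVec) < f(\pAdvVec) - \lambda^* g(\pAdvVec) \le \max_{\pAdvVec' \in \Delta^L} \bigl[ f(\pAdvVec') - \lambda^* g(\pAdvVec') \bigr] = f(\pAdvVec^*),
\end{equation*}
where the last equality uses complementary slackness. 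Thus every maximizer of the penalized objective must be feasible and must attain $f(\pAdvVec^*)$, hence is also a maximizer of the constrained problem; the converse inclusion is immediate because the penalty vanishes on feasible points.

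The main obstacle is establishing a finite Lagrange multiplier $\lambda^*$: this is precisely where the hypothesis $r>0$ and the interiority of $\pEmp$ are used (for Slater's condition and for continuity/finiteness of $g$ on $\Delta^L$), while boundedness of $\ell$ ensures the dual problem has a finite value. Once $\lambda^*$ is in hand, the verification of the penalty threshold is the routine strict-inequality argument above.
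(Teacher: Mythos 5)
Your proof is correct, but it takes a genuinely different route from the paper's. The paper argues directly via first-order conditions: it lower-bounds the gradient norm of $\pi \mapsto \mathrm{KL}(\pi,\pEmp)$ by some $c>0$ on the infeasible region $\{\mathrm{KL}(\pi,\pEmp)>r\}$, upper-bounds the objective's gradient by the loss bound $M$, and concludes that for $\gamma_c > M/c$ no stationary point (hence no maximiser) of the penalised objective can be infeasible. You instead invoke the standard exact-penalty theorem: Slater's condition (via $\pEmp$ interior and $g(\pEmp)=-r<0$) gives a finite multiplier $\lambda^*$, and any $\gamma_c>\lambda^*$ makes the penalty exact by the usual complementary-slackness case analysis. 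Your route is cleaner and avoids the delicate points in the paper's argument (the subdifferential computation must account for the normal cone of the simplex, and the infimum $c$ is taken over a set approaching the simplex boundary where the KL gradient blows up), at the cost of invoking duality machinery rather than an elementary gradient comparison.

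One point to tighten: your threshold $\gamma^* = \lambda^*+1$ depends on $\theta$ through $f$, whereas the paper's threshold $M/c$ is uniform in $\theta$ --- and uniformity is what is actually needed when the reformulation \eqref{eq:dro-lag} is minimised over $\theta$ with a single fixed $\gamma_c$. This is easily repaired: Slater's condition gives the standard bound
\begin{equation*}
    \lambda^* \;\le\; \frac{f(\pAdvVec^*) - f(\pEmp)}{-g(\pEmp)} \;\le\; \frac{2\sup|\ell|}{r},
\end{equation*}
so you may take $\gamma^* = 2\sup|\ell|/r + 1$ uniformly over $\theta$. Note this is also where the boundedness of $\ell$ genuinely enters your argument; for a fixed $\theta$, continuity of $f$ on the compact simplex already suffices for everything else you use.
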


Motivated by this, we may thus transform the objective~\eqref{eq:dro} into:
\begin{align}
    \label{eq:dro-lag}
    \min_\theta \, \max_{\pAdvVec \in \Delta^{\numLabels}}  \E_{\pi}[ \ell(x, y, \theta)] + \min \{0, \gamma_c (r -\mathrm{KL}(\pAdvVec, \pEmp) ) \},
\end{align}
where $\gamma_c > 0$ is a sufficiently large constant;
in practice, this may be chosen by a bisection search.
The advantage of this formulation is that it admits an efficient update for $\pi$, as we now discuss.

\subsection{Adversarial distribution updates} \label{sec:updates}

We now detail how we can employ \emph{proximal mirror descent} to efficiently update $\pi$. Observe that we may decompose the adversary's (negated) objective into two terms:
$f(\theta, \pAdv ) := -\E_\pi[ \ell(x, y, \theta)]$
and 
$h( \pAdv ) := \max \{0, \gamma_c (\mathrm{KL}(\pAdvVec, \pEmp) - r ) \}$,
where $h( \pi )$ is independent of the samples.
Such decomposable objectives 
suggest using \emph{proximal} updates~\citep{combettes2011proximal}:
\begin{equation}
\label{eq:proximal-gd}
    \resizebox{0.9\linewidth}{!}{$\displaystyle
    \pAdv_{t+1} 
    = \text{prox}_{\lambda h} (\pi_t - \lambda \nabla_{\pi} f(\theta_t, \pi_t)) 
    :=  
    \underset{\pAdv \in \Delta^L}{\operatorname{argmin}} \, h(\pAdv) + \frac{1}{2\lambda} ( \|\pAdv_t - \pAdv\|^2 + 2\lambda \inner{\nabla_{\pi} f(\theta_t, \pAdv_t), \pAdv} ) ,
    $}
\end{equation}
where $\lambda$ serves as the learning rate. The value of proximal descent relies on the ability to efficiently solve the minimisation problem in~\eqref{eq:proximal-gd}.
Unfortunately, this does not hold as-is for our choice of $h( \pi )$, essentially due to a mismatch between the use of KL-divergence in $h$,
and Euclidean distance $\| \pi_t - \pi \|^2$ in~\eqref{eq:proximal-gd}.
Motivated by the advantages of \emph{mirror descent} over gradient descent  on the simplex~\citep{bubeck2014convex}, we propose to replace the Euclidean distance with KL-divergence:
\begin{align}\label{eq:pi-argmin}
   \pAdv_{t+1} 
    =   
    \underset{\pAdv \in \Delta^L}{\operatorname{argmin}} \, h(\pAdv) + \frac{1}{2\lambda} ( \mathrm{KL}( \pAdv,  \pAdv_t) + 2\lambda \inner{g_t, \pAdv} ) ,
\end{align}
where $g_t$ is an unbiased estimator of $\nabla_{\pi} f(\theta_t, \pi_{t})$. 
We have the following closed-form update.

\begin{lemma}\label{lemma:closedform}
Assume the optimal solution $\pi_{t+1}$ to \eqref{eq:pi-argmin} satisfies $\mathrm{KL}(\pi_{t+1}, \pEmp) \neq r$, 
and that all the classes appeared at least once in the empirical distribution, i.e. $\forall i, \pEmp^i > 0$.  
Let $\gamma = \gamma_c\  \text{if}\  r < \mathrm{KL}(\pAdvVec_{{t + 1}}, \pEmp)$, and $\gamma = 0 \text{ if }\  r > \mathrm{KL}(\pAdvVec_{{t + 1}}, \pEmp)$, then $\pi_{t+1}$ permits a closed form solution
\begin{align}
    \label{eqn:adv-update}
    \pAdvVec_{t+1} 
    &= (\pAdvVec_t \odot \pEmp^\alpha)^{1/(1+\alpha)} \exp{(\eta_\pi g_t)} / C,
\end{align}
where
$\eta_\pi = \tfrac{1}{(\gamma + 1/2\lambda)(1+\alpha)}$,
$\alpha = 2\gamma\lambda$,
$C = \|(\pi_t \odot \pEmp^\alpha)^{1/(1+\alpha)} \exp{(\eta_{\pi}g_t)}\|_1$ projects $\pi_{t+1}$ onto the simplex, and $a \odot b$ is the element-wise product between two vectors $a, b$. 
\end{lemma}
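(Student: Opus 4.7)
My plan is to reduce~\eqref{eq:pi-argmin} to a smooth convex minimisation on the open simplex and then read off the minimiser from first-order KKT conditions. The function $F(\pi) := h(\pi) + \tfrac{1}{2\lambda}\mathrm{KL}(\pi,\pi_t) + \langle g_t,\pi\rangle$ is convex, so any point satisfying the subdifferential optimality condition is the unique minimiser. The hypothesis $\mathrm{KL}(\pi_{t+1},\pEmp) \neq r$ lets me bypass the only source of non-smoothness, namely the kink in $h(\pi) = \max\{0, \gamma_c(\mathrm{KL}(\pi,\pEmp) - r)\}$: if $\mathrm{KL}(\pi_{t+1},\pEmp) < r$ then $h$ is locally identically $0$ and $\partial h(\pi_{t+1}) = \{0\}$, while if $\mathrm{KL}(\pi_{t+1},\pEmp) > r$ then $h$ locally coincides with the smooth affine-in-KL function $\gamma_c(\mathrm{KL}(\pi,\pEmp) - r)$. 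Setting $\gamma \in \{0, \gamma_c\}$ accordingly collapses the subdifferential to a singleton and turns stationarity into an ordinary gradient equation.

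Next I would dispose of the inequality constraints $\pi^i \ge 0$. The assumption $\pEmp^i > 0$ combined with the uniform initialisation and the exponential form of the update ensures inductively that $\pi_t^i > 0$, so the logarithmic barrier implicit in $\mathrm{KL}(\cdot,\pi_t)$ forces any finite minimiser into the relative interior. Only $\sum_i \pi^i = 1$ remains, which I would handle with a single Lagrange multiplier $\mu$. Coordinate-wise stationarity of the resulting Lagrangian reads
\begin{equation*}
\gamma(\log\pi^i - \log\pEmp^i) + \tfrac{1}{2\lambda}(\log\pi^i - \log\pi_t^i) + g_t^i + \mu' = 0,
\end{equation*}
where $\mu'$ absorbs $\mu$ together with the constant $+1$ terms from $\nabla\mathrm{KL}(\pi,q) = \log(\pi/q) + \mathbf{1}$. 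Grouping the $\log\pi^i$ terms, using $\gamma + \tfrac{1}{2\lambda} = (1+\alpha)/(2\lambda)$ with $\alpha = 2\gamma\lambda$, and exponentiating then produces
\begin{equation*}
\pi^i \;\propto\; (\pi_t^i)^{1/(1+\alpha)}\,(\pEmp^i)^{\alpha/(1+\alpha)}\exp(\eta_\pi g_t^i);
\end{equation*}
the normalisation $\sum_i \pi^i = 1$ fixes $C$ to be the $\ell_1$ norm of the right-hand side, which is precisely~\eqref{eqn:adv-update}.

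The main (mild) subtlety will be the initial case split on $h$: one must verify \emph{a posteriori} that the closed form produced under the guess ``$\gamma = 0$'' indeed lies in $\{\mathrm{KL}(\pi,\pEmp) < r\}$, and symmetrically that the guess ``$\gamma = \gamma_c$'' lands in $\{\mathrm{KL}(\pi,\pEmp) > r\}$. Convexity of $F$ guarantees that at most one of the two guesses is self-consistent, and the hypothesis $\mathrm{KL}(\pi_{t+1},\pEmp) \neq r$ guarantees that at least one is. Beyond that, everything reduces to the standard mirror-descent-on-the-simplex algebra for a sum of two shifted KL divergences.
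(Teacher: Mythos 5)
Your proposal follows essentially the same route as the paper's proof: form the Lagrangian over the simplex, use positivity of $\pi_t$ and the $\mathrm{KL}(\cdot,\pi_t)$ barrier to rule out boundary solutions, split on the two smooth branches of $h$ according to whether $\mathrm{KL}(\pi_{t+1},\pEmp)$ exceeds $r$, and solve the coordinate-wise stationarity condition by exponentiation and normalisation. The only caveat is the bookkeeping of the sign and of the exact constant multiplying $g_t$ in the exponent --- your stationarity equation as written yields $\exp\bigl(-\tfrac{2\lambda}{1+\alpha}g_t^i\bigr)$ rather than $\exp(\eta_\pi g_t^i)$ --- but the paper's own derivation is no more careful on this point, so this reflects the statement's conventions rather than a gap in your argument.
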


In Algorithm 1, we set $\gamma = \gamma_c\  \text{if}\  r < \mathrm{KL}(\pAdvVec_{{t}}, \pEmp)$ and $0$ otherwise to appoximate the true $\gamma$. Such approximation works well when $r - \mathrm{KL}(\pAdvVec_{{t}}, \pEmp)$ does not change sign frequently.

\subsection{Convergence Analysis}
\label{sec:convergence}

We provide below a convergence analysis of our gradient descent-proximal mirror ascent method for nonconvex-concave stochastic saddle point problems. 
For
the
composite objective
    $\min_\theta \max_{\pi \in \Delta^L} f(\theta, \pi) + h(\pi)$,
and 
{fixed} learning rate $\eta_\theta$,
we abstract the Algorithm 1 update as:
\begin{align}\label{eq:update}
    \theta_{t+1} = \theta_t - \eta_\theta g(\theta_t), \quad
    \pi_{t+1} =  \underset{\pi}{\operatorname{argmax}} \, h(\pAdv) - \frac{1}{2\lambda} ( \mathrm{KL}( \pAdv,  \pAdv_t) + 2\lambda \inner{g(\pi_t), \pAdv} ) ,
\end{align}
where $g(\pi), g(\theta)$ are stochastic gradients assumed to satisfy the following.

\begin{assumption}\label{assump:gtheta}
The stochastic gradient $g(\theta)$ with respect to $\theta$ satisfies that for some $\sigma > 0$,
$$\E[g(\theta)] = \nabla_\theta f(\theta, \pi), \, \text{ and } \, \E[\|g(\theta) - \E[g(\theta)]\|^2] \le \sigma^2. $$
\end{assumption}

\begin{assumption}\label{assump:gpi}
The stochastic gradient $g(\pi)$ with respect to $\pi$ satisfies that for some $G > 0$,
$$\E[g(\pi)] =\nabla_\pi f(\theta, \pi), \, \text{ and } \, \E[\|g(\pi) \|_\infty^2] \le G^2. $$
\end{assumption}

We make the following assumptions about the objective, 
similar to~\citet{lin2019gradient,lin2020nearoptimal}: 

\begin{assumption}\label{assump:cts}
$ f(\theta, \pi) + h(\pi)$ is $L-$smooth and $l-$Lipschitz;
$ f(\theta, \pi) $ and $h(\pi)$ are concave in $\pi$. 
\end{assumption}

\begin{assumption}\label{assump:radius}
Every adversarial distribution iterate $\pi_{t}$ satisfies $\mathrm{KL}(\pi_{t}, \pEmp) \le R$ for some $R > 0$.
\end{assumption}

Assumption~\ref{assump:cts} and~\ref{assump:radius} may be enforced by adding a constant $\epsilon$ to the adversarial updates,
which prevents $\pi_t$ from approaching the boundary of the simplex.
Assumption~\ref{assump:gpi} in the label shift setting implies that the loss is upper and lower bounded. 
Such an assumption may %
be enforced by
clipping the loss for computing the adversarial gradient, 
which can significantly speed up training (see Section~\ref{sec:experiment}).
Furthermore, this is a standard assumption for analyzing nonconvex-concave problems~\citep{lin2019gradient}. 
The assumption that the
square $L_\infty$ norm is bounded is weaker than $L_2$ norm being bounded;
such a relaxation results from using mirror rather than Euclidean update.

Given that the function 
$ F(\theta) :=\max_{\pi \in \Delta} f(\theta, \pi) + h(\pi) $
is nonconvex, our goal is to find a stationary point instead of approximating global optimum.
Yet, due to the minimax formulation, the function $F(\theta)$ may not necessarily be differentiable. 
Hence, we define convergence following some recent works ~\citep{davis2019stochastic,lin2019gradient,thekumparampil2019efficient} on nonconvex-concave optimisation. 
First, Assumption~\ref{assump:cts} implies $F(\theta)$ is $L-$weakly convex and $l$-Lipschitz~\citep[Lemma 4.7]{lin2019gradient}. 
Hence, we define stationarity in the language of weakly convex functions.

\begin{definition}
A point $\theta$ is an $\epsilon-$stationary point of a weakly convex function $F$ if $\|\nabla F_{1/2L}(\theta)\| \le \epsilon$, where $F_{1/2L}(\theta)$ denotes the Moreau envelope $F_{1/2L}(\theta) = \min_w F(w) + L\|w - \theta\|^2.$
\end{definition}

With the above definition, we can establish convergence of the following update:

\begin{theorem}[informal]\label{thm:main}
Under Assumptions~\ref{assump:gtheta}--\ref{assump:radius},
the update in \eqref{eq:update}
    finds a point
    $\theta$ with $\E[\| \nabla F_{\frac{1}{2L}}(\theta) \|] \le \epsilon$ in $\mathcal{O}(\epsilon^{-8}) $ iterations.
\end{theorem}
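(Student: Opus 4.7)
The plan is to follow the Moreau-envelope potential-function approach of \citet{davis2019stochastic,lin2019gradient} for nonconvex--concave stochastic minimax problems, with the Euclidean projected-ascent step of those works replaced by the KL mirror-ascent step used here. By Assumption~\ref{assump:cts}, $F(\theta) := \max_{\pi\in\Delta^L} f(\theta,\pi) + h(\pi)$ is $L$-weakly convex and $l$-Lipschitz, so the Moreau envelope $F_{1/2L}$ is continuously differentiable with $\nabla F_{1/2L}(\theta_t) = 2L(\theta_t - \hat\theta_t)$ where $\hat\theta_t := \argmin_w F(w) + L\|w-\theta_t\|^2$. It thus suffices to control $\E\|\theta_t - \hat\theta_t\|$ uniformly in $t$.

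The first step is a one-step descent inequality for the envelope. Plugging $\theta_{t+1} = \theta_t - \eta_\theta g(\theta_t)$ into the definition of $F_{1/2L}$ at $\theta_{t+1}$, using the $L$-weak convexity of $F$, concavity of $f(\theta_t,\cdot) + h(\cdot)$, and Assumption~\ref{assump:gtheta} on the gradient noise, standard manipulations should give
\begin{equation*}
\E[F_{1/2L}(\theta_{t+1})] \le F_{1/2L}(\theta_t) - c_1 \eta_\theta \|\nabla F_{1/2L}(\theta_t)\|^2 + c_2 L\eta_\theta \Delta_t + c_3 L\eta_\theta^2(\sigma^2 + l^2),
\end{equation*}
where $\Delta_t := F(\theta_t) - \bigl(f(\theta_t,\pi_t)+h(\pi_t)\bigr)\ge 0$ is the inner primal gap and $c_1,c_2,c_3$ are absolute constants. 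This reduces the task to bounding $\tfrac{1}{T}\sum_t \E[\Delta_t]$.

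The second step bounds $\Delta_t$ through a stochastic mirror-ascent regret analysis with the KL Bregman divergence. Since $f(\theta_t,\cdot)+h(\cdot)$ is concave, the standard three-point identity applied to the update~\eqref{eq:update} gives, for every $\pi^*\in\Delta^L$,
\begin{equation*}
\langle g(\pi_t), \pi^* - \pi_t\rangle \le \tfrac{1}{2\lambda}\bigl(\mathrm{KL}(\pi^*,\pi_t) - \mathrm{KL}(\pi^*,\pi_{t+1})\bigr) + \tfrac{\lambda}{2}\|g(\pi_t)\|_\infty^2 .
\end{equation*}
Telescoping over a window of length $B$ with $\pi^*$ fixed to a maximiser of $f(\theta_{t_0},\cdot)+h(\cdot)$ at the window's start, and invoking Assumptions~\ref{assump:gpi} and~\ref{assump:radius}, gives $\tfrac{1}{B}\sum \E[\Delta_t] \lesssim R/(\lambda B) + \lambda G^2 + l\eta_\theta B$, where the last term accounts for the drift of the inner maximiser $\pi^*(\theta_t)$ as $\theta_t$ evolves across the window.

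Finally, combining the descent inequality with the primal-gap bound, summing over $T$ iterations, and tuning $\eta_\theta = \Theta(\epsilon^4/L)$, $\lambda = \Theta(\epsilon^2/G)$, and window length $B = \Theta(\epsilon^{-2})$ balances every term to $O(\epsilon^2)$ and yields $T = \mathcal{O}(\epsilon^{-8})$; Jensen's inequality then extracts the statement $\E\|\nabla F_{1/2L}(\theta)\| \le \epsilon$. The main obstacle is the drift term: since the inner problem is only concave (not strongly concave), $\pi^*(\theta)$ is not pointwise Lipschitz in $\theta$, and its movement must be handled via block-telescoping as above rather than by a classical Danskin-style argument. Transporting this device from the Euclidean setting of \citet{lin2019gradient} to the KL mirror-ascent setting here also requires reconciling the regret (measured in KL) with the drift (naturally measured in Euclidean norm), which can be done via Pinsker's inequality combined with the $L_\infty$ stochastic-gradient bound of Assumption~\ref{assump:gpi}; these choices pin down the step-size scaling and prevent improvement beyond the $\epsilon^{-8}$ rate without additional problem structure.
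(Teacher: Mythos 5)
Your plan follows essentially the same route as the paper: Moreau-envelope descent for the $\theta$-update, reduction to controlling the average inner primal gap $\Delta_t$, a KL mirror-ascent regret bound via the three-point identity (Lemma~\ref{lemma:kl-recursion} in the paper), strong convexity of KL with respect to $\ell_1$ paired with the $\ell_\infty$ gradient bound of Assumption~\ref{assump:gpi}, and block-telescoping over windows of length $B$ with the comparator frozen at $\pi^*(\theta_s)$ to absorb the drift of the inner maximiser. All of these ingredients match the paper's argument, and your identification of the drift term as the reason a Danskin-style argument fails is exactly the point the paper's proof is organised around.

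The one concrete problem is your final parameter tuning, which as stated does not balance the terms. The window-averaged primal gap is of order $R/(\lambda B) + \lambda G^2 + l\,\eta_\theta B$; with your choices $\lambda = \Theta(\epsilon^2)$ and $B = \Theta(\epsilon^{-2})$ you get $\lambda B = \Theta(1)$, so the term $R/(\lambda B)$ remains a constant and the primal gap never vanishes. Making $R/(\lambda B) = O(\epsilon^2)$ forces $B \gtrsim \epsilon^{-4}$, and then keeping the drift term $l\,\eta_\theta B = O(\epsilon^2)$ forces $\eta_\theta \lesssim \epsilon^6$ (not $\epsilon^4$), whence $T\eta_\theta \gtrsim \epsilon^{-2}$ gives $T = \Omega(\epsilon^{-8})$. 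This is exactly the paper's schedule $\eta_\theta = T^{-3/4}$, $B = T^{1/2}$, $\lambda = T^{-1/4}$ with $T = \epsilon^{-8}$; it is also why the $\epsilon^{-8}$ rate is tight for this argument, as you correctly anticipate. So the gap is an arithmetic one in the last step rather than a structural one, but as written your schedule would not prove the theorem.
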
 
For a precise description of the theorem, please see Appendix~\ref{sec:thm-proof}. The above result matches the  rate in \cite{lin2019gradient} for optimising nonconvex-concave problem with single-loop algorithms, but is slower than the iterative regularisation approach in~\cite{rafique2021weaklyconvex} which achieves a faster $\mathcal{O}(\epsilon^{-6}) $ rate. By utilizing the proximal operator, it solves the objective with an extra $h(\pi)$ term without incurring additional complexity cost.

\subsection{Clipping and regularising for faster convergence}

In addition to the proposed algorithm, we apply two additional techniques. We explain them here with motivations. First, we also observe that the adversarial's update could be very sensitive to the adversarial gradient $g_k$, i.e. label-wise loss in each minibatch, because the gradient appears in the exponential of the update. To avoid convergence degradation resulted from the noise in $g_k$, we clip the label-wise loss at value $2$. Second, we notice that the KL divergence from any interior point of a simplex to its boundary is infinity. Hence, updates near boundary can be highly unstable due to the nonsmooth KL loss. To cope with this, we add a constant $\epsilon$ term on the 
adversarial distribution to avoid the adversarial distribution reaching any of the vertices on the simplex.  The $\epsilon$ term and clipping is critical in both training and convergence analysis.
We conduct an ablation of the sensitivity to these parameters in Figures~\ref{fig:clipping_ablation} and~\ref{fig:eps_ablation}. Note that the experiments show that even without these tricks, our proposed algorithm alone  still outperform baselines.
\subsection{Discussion and comparison to existing algorithms}

A number of existing learning paradigms
(e.g., fairness, adversarial training, and domain adaptation)
have connections to
the problem of adversarial label shift;
see Appendix~\ref{sec:other-related} for details.

We comment on some key differences between \algname{} and related techniques in the literature.
For the problem of minimising the worst-case loss~\eqref{eqn:agnostic-risk}
---
which is equivalent to setting the radius $r = +\infty$ in~\eqref{eq:dro}
---
\citet{Sagawa:2020} and \citet{Mohri:2019} propose a way to evaluate gradients using importance sampling, and then apply projected gradient descent-ascent. 
This method may suffers from instability owing to sampling (upon which we improve with proximal updates).
We will illustrate these problems in our subsequent experiments (see results for {\sc Agnostic} in \S\ref{sec:experiment}). 
Finally, for an uncertainty set $\mathscr{P}$ based on the CVaR,
\citet{Curi:2019} provide an algorithm that updates weights using {\sc Exp3}. 
This approach 
relies on a determinantal point process, which has a poor
dimension-dependence.

\section{Experimental results}
\label{sec:experiment}

We now present a series of experiments to evaluate the performance of the proposed {\algname{}} algorithm and how it compares to related approaches from the literature. We first explain our experiment setups and evaluation methods. We then present the results on ImageNet dataset, and show that under the adversarial validation setting, our proposed algorithm significantly outperforms other methods discussed in Table~\ref{tbl:summary}. Similar results on CIFAR-100 are shown in the Appendix.

\begin{figure*}[!t]
    \centering
    \vspace{-0.5cm}

    \subfigure[Ablation train.]
    {
        \includegraphics[scale=0.171]{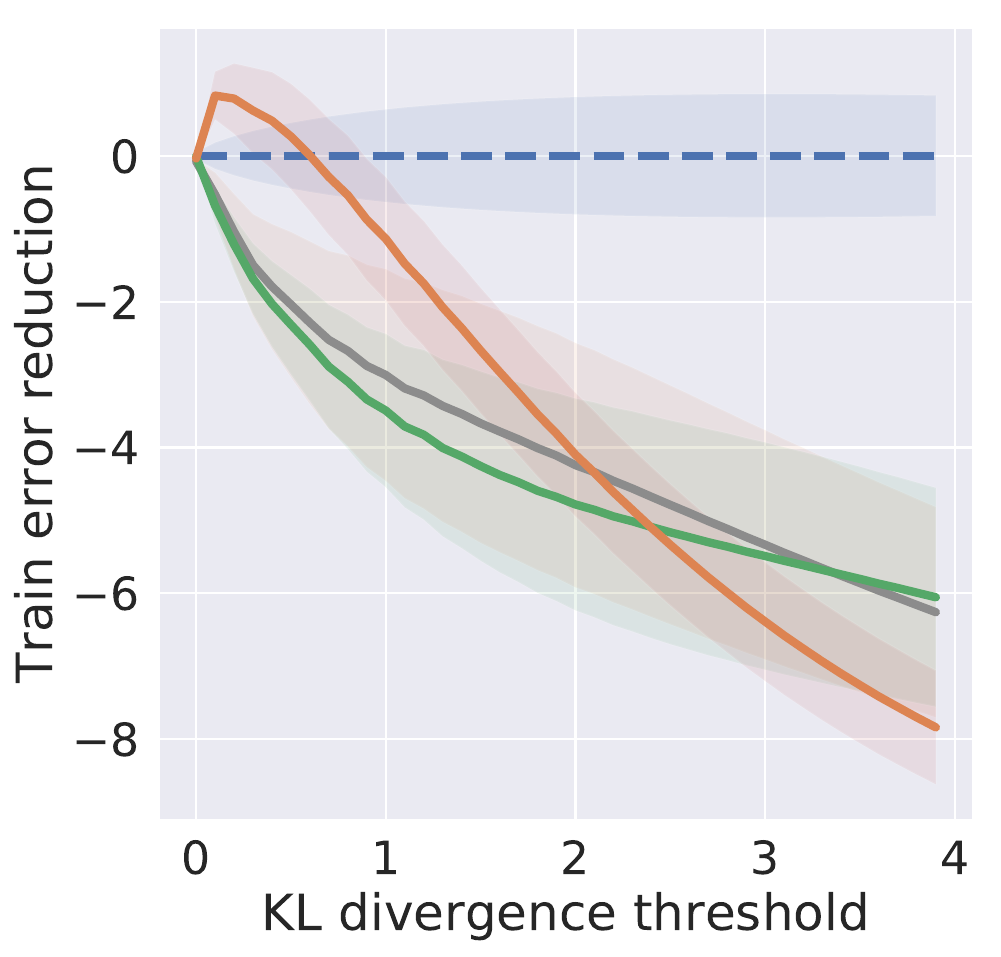}
        \label{fig:imagenet_main_a}
    }
    \subfigure[Ablation validation.]
    {
        \includegraphics[scale=0.171]{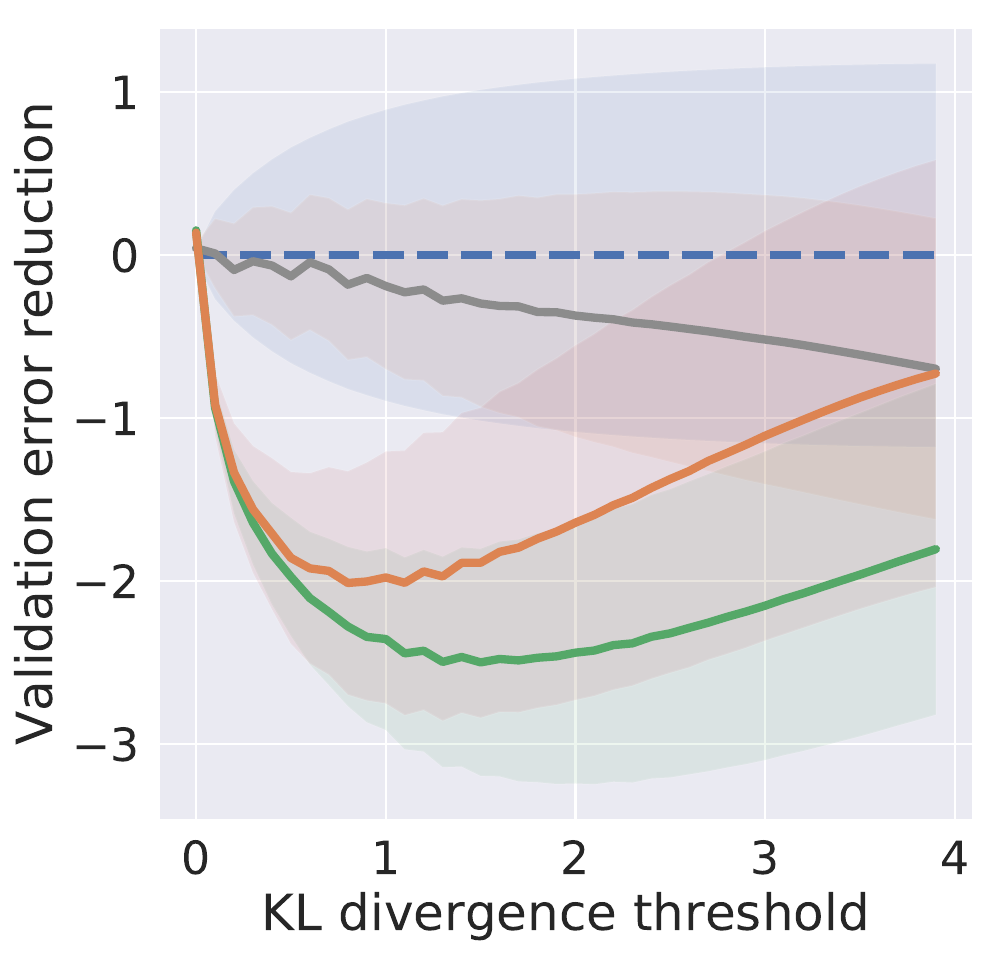}
        \label{fig:imagenet_main_b}
    }
     \subfigure[Comparison train.] 
     {
         \includegraphics[scale=0.171]{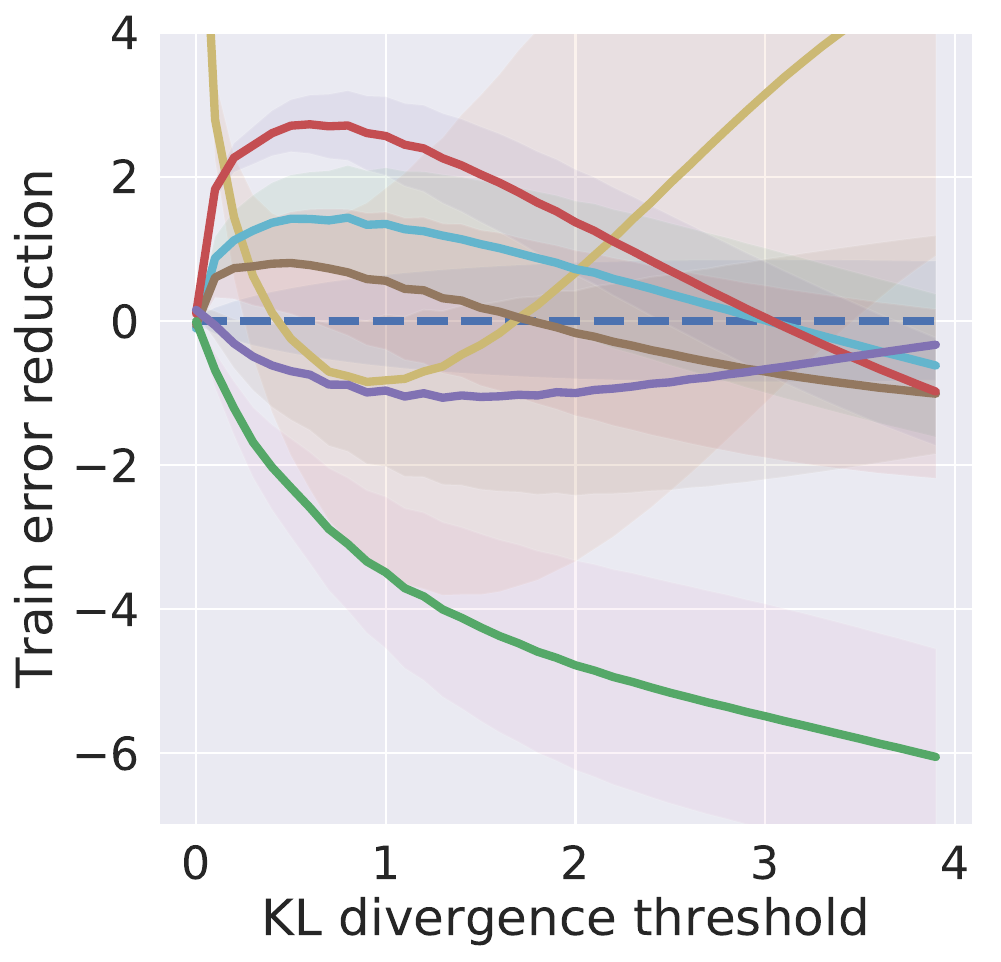}
         \label{fig:imagenet_main_d}
     }
    \subfigure[Comparison validation.] 
    {
        \includegraphics[scale=0.171]{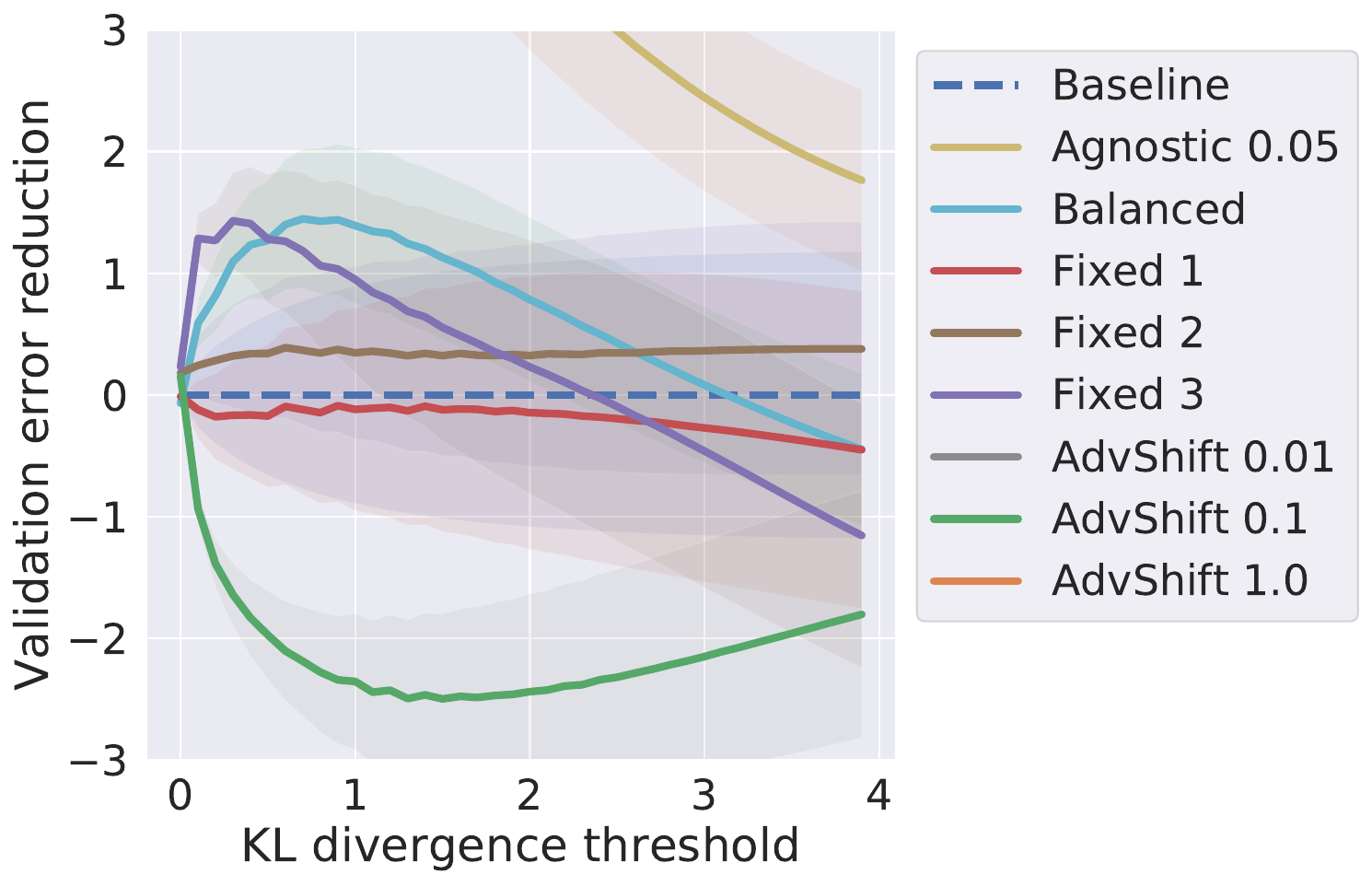}
        \label{fig:imagenet_main_e}
    }
    
    \caption{
    Comparison of performance on ImageNet under adversarial label distributions.
    For each method, we vary the KL divergence threshold $\tau$,
    and for each $\tau$ report the maximal validation error induced by the adversarial shift within the threshold.
    Subplots (a) (b) compare the performance of {\algname{}} trained with different DRO radius $r$ against the default ERM training. We subtract the baseline error of ERM from all values for easy visualization. Absolute values can be found in Figure~\ref{fig:imagenet_main_unnormalised} in the Appendix.
    Combined with (c), (d), we see that {\algname{}} can reduce the adversarial validation error by over $\sim2.5\%$ compared to the {\sc baseline} method and is consistently superior to the {\sc agnostic}, {\sc balanced} and {\sc fixed} methods.
    Figure~\ref{fig:imagenet_adv_distribution} illustrates adversarial distributions for varying thresholds $\tau$.
     }
    \label{fig:imagenet_main}
    \vspace{-\baselineskip}
\end{figure*}

\subsection{Experimental setup}
To evaluate the proposed method, we use the standard image classification setup of training a ResNet-50 on ImageNet using SGD with momentum as the neural network optimiser. 
All algorithms are run for 90 epochs, {and are found to take almost the same clock time}. Note that ImageNet has a largely balanced training label distributions, and perfectly balanced validation label distributions.

We assess the performance of models under adversarial label shift as follows.
First, we train a model on the training set and compute its error distribution on the validation set.
Next, we pick a threshold $\tau$ on the allowable KL divergence between the train and target distribution and find the adversarial distribution within this threshold which achieves the worst-possible validation error.
Finally, we compute the validation performance under this distribution.
Note that $\tau = 0$ corresponds to the train distribution,
while $\tau = +\infty$ corresponds to the worst-case label distribution (see Figure~\ref{fig:imagenet_error_dist}).

We evaluate the following methods, each corresponding to one row in Table~\ref{tbl:summary}:
\begin{enumerate*}[label=(\roman*),itemsep=0pt,topsep=0pt]
    \item standard empirical risk minimisation ({\sc baseline})
    \item balanced empirical risk minimisation ({\sc balanced})
    \item agnostic federated learning algorithm of~\citet{Mohri:2019}, which minimises the worst-case loss ({\sc agnostic})
    \item our proposed KL-divergence based algorithm, for various choices of adversarial radius $r$ (\algname{})
    \item training with \algname{} 
    with a \emph{fixed} adversarial distribution extracted from Figure~\ref{fig:imagenet_adv_distribution} ({\sc Fixed}). 
    This corresponds to the estimated test distribution row in Table~\ref{tbl:summary} with an ideal estimator.
\end{enumerate*}

\begin{figure*}[!t]
    \centering
    
    \subfigure[Train.]
    {
        \includegraphics[scale=0.2]{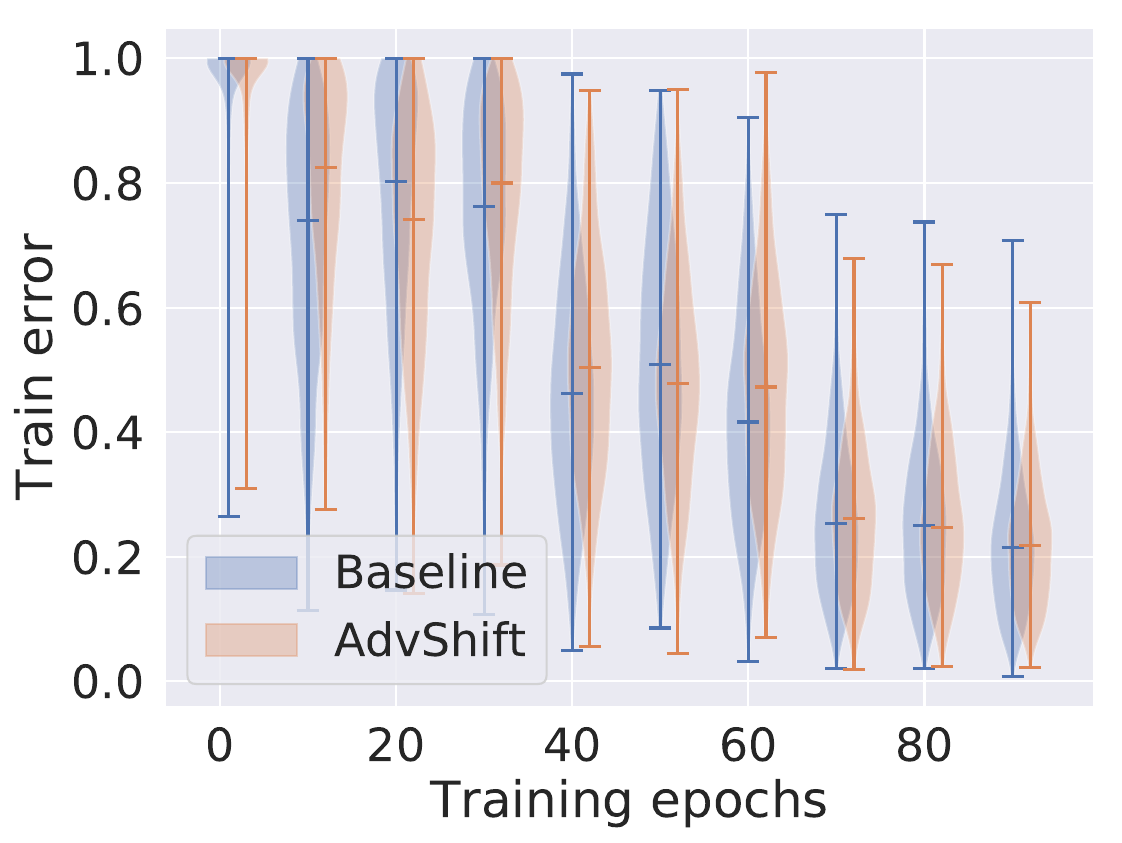}
    }
    \subfigure[Validation.]
    {
        \includegraphics[scale=0.2]{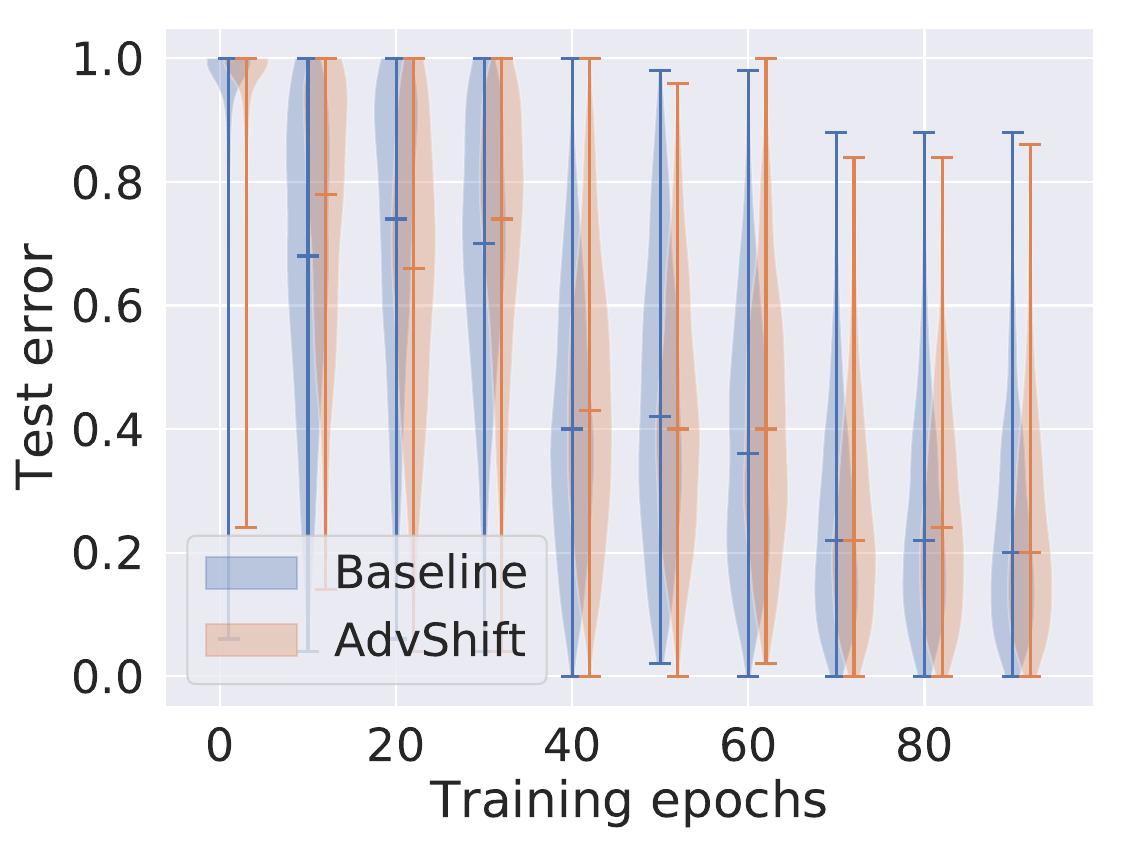}
    }
        \subfigure[Adversarial distributions.]
    {
        \label{fig:imagenet_adv_distribution}
        \includegraphics[scale=0.2]{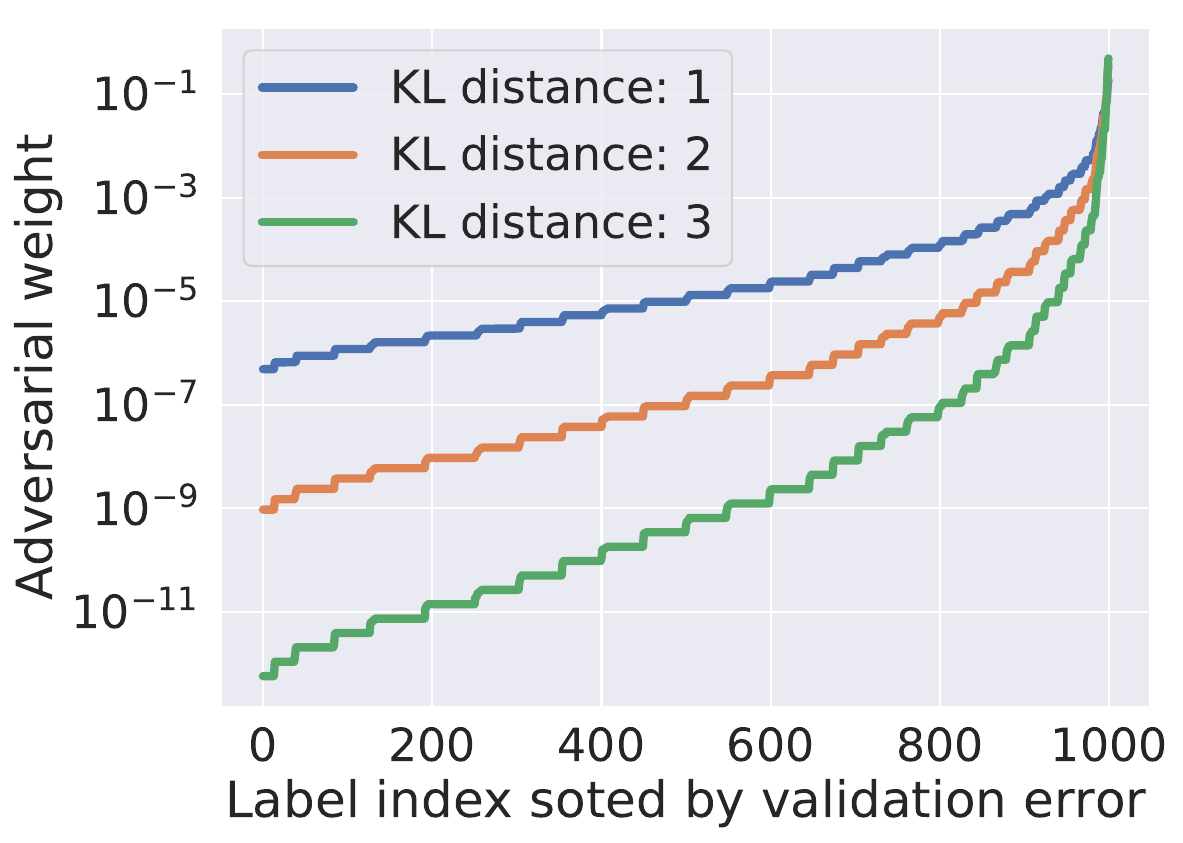}
    }
    \caption{
    Subplots (a) (b) show violin plots of the distribution of errors for both the {\sc baseline} and our \algname{} methods over the course of training.
    On the training set, \algname{} significantly reduces the worst-case error, evidenced by lower upper endpoints of the distribution.
    On the validation set, the reduction is consistent, albeit less pronounced owing to a generalisation gap. Subplot (c) illustrates adversarial distributions at KL distances of 1, 2 and 3 for model trained with {\sc baseline}. Even at $\tau = 1$, the adversarial distribution is highly concentrated on only a few hard labels.
    }
    \label{fig:imagenet_violin}
    \vspace{-\baselineskip}
\end{figure*}

\subsection{Results and discussion}

Figure~\ref{fig:imagenet_main} shows the train and validation performance on ImageNet.
Each curve represents the average and standard deviation across $10$ independent trials.
To better illustrate the differences amongst methods, we plot the difference in error to the {\sc baseline} method.
(See Figure~\ref{fig:imagenet_main_unnormalised} in the Appendix for unnormalised plots.) Subfigures (a) and (b) compre the performance of \algname{} for various choices of radius $r$ to the ERM baseline; (c) and (d) compare \algname{} to the remaining methods. Hyperparameters for each method are separately tuned. {\sc Fixed} 1, 2, 3 corresponds to training with each of the three adversarial distributions in Figure~\ref{fig:imagenet_adv_distribution}.
We see that:
\begin{itemize}[leftmargin=16pt,itemsep=0pt,topsep=0pt]
    \item the reduction offered by \algname{} is consistently superior to that afforded by the {\sc agnostic}, {\sc Balanced} and {\sc Fixed} methods. On the training set, we observe significant ($\sim8\%$) reduction in performance for large KL divergence thresholds.
    On the validation set, the gains are less pronounced ($\sim2.5\%$), indicating some degradation due to a generalisation gap.

    \item while \algname{} consistently improves above the baseline across adversarial radii, we observe best performance for $r=0.1$. Smaller values of $r$ lead to smaller improvements,
    while training becomes increasingly unstable for larger radii. Please see the discussion in the last section.
    
    \item during training, {\sc agnostic} either learns the adversarial distribution too slowly (such that it behaves like ERM), or uses too large a learning rate for the adversary (such that the training fails). This highlights the importance of the proximal mirror ascent updates in our algorithm.

\end{itemize}
\begin{figure*}[!t]
    \centering
    
    \includegraphics[scale=0.23]{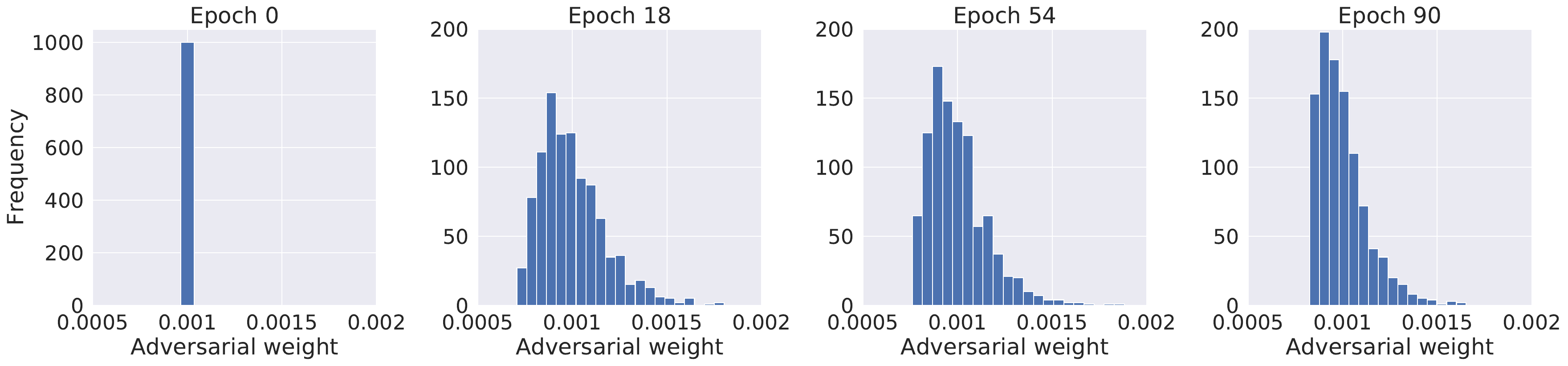}   
 
    \caption{
    Evolution of learned adversarial distribution ($\pi$) across training epochs.
    Starting off from a uniform distribution over labels,
    the adversary quickly infers the relative difficulty of a small fraction of labels,
    assigning nearly $2\times$ the weight on them compared to the average.
    This distribution remains largely stable in subsequent iterations, getting gradually more concentrated as training converges.} 
    \label{fig:imagenet_evolution}
    \vspace{-\baselineskip}
\end{figure*}

\textbf{Illustration of distributions at fixed KL thresholds}.
Figure~\ref{fig:imagenet_adv_distribution} visualises the adversarial distributions corresponding to a few values of the KL threshold $\tau$.
At a threshold of $\tau = 3$, the adversarial distribution is concentrated on only a few hard labels. Consequently, the resulting performance on such distributions is highly reflective of the worst-case distribution that can happen in reality. 

\textbf{Training with a fixed adversarial distribution}.
Suppose we take the final adversarial distributions shown in Figure~\ref{fig:imagenet_adv_distribution}, and then employ them as fixed distributions during training;
this corresponds to the \emph{specified a-priori and estimated validation distribution} approaches in Table~\ref{tbl:summary}. 
Does the resulting model similarly reduce the error on hard classes?  Surprisingly, Figure~\ref{fig:imagenet_main_e} indicates this is not so, and
performance is in fact significantly worse on the ``easy'' classes.
Employing a fixed adversarial distribution may thus lead to underfitting, which has an intuitive explanation:
the model %
must struggle to fit difficult patterns from early stages of training.
Similar issues with importance weighting in conjunction with neural networks have been reported in~\citet{Byrd:2019}.

\textbf{Evolution of error distributions}.
To dissect the evolution of performance during training, Figure~\ref{fig:imagenet_violin} shows violin plots of the distribution of errors for both the {\sc baseline} and our \algname{} methods after fixed training epochs.
We observe that on the training set, \algname{} significantly reduces the worst-case error, evidenced by the upper endpoints of the distribution being reduced.
Note also that, as expected, the adversarial algorithm is slower to reduce the error on the ``easy'' classes early in training, evidenced by the lower endpoints of the distribution initially taking higher values.
On the validation set, the reduction is consistent, albeit less pronounced owing to a generalisation gap.

\textbf{Evolution of learned adversarial weights}.
To understand the evolution of the adversarial distribution across training epochs, Figure~\ref{fig:imagenet_evolution} plots the histogram of adversary weights at fixed training epochs.
Starting off from a uniform distribution,
the adversary is seen to quickly infer the relative difficulty of a small fraction of labels,
assigning $\sim2\times$ the weight on them compared to the average.
In subsequent iterations the distribution becomes more concentrated, and gradually reduces the largest weights.

\textbf{Ablation of clipping threshold and gradient stabiliser}.

Figures~\ref{fig:clipping_ablation} and~\ref{fig:eps_ablation} show an ablation of the choice of loss clipping threshold,
and the gradient stabiliser $\epsilon$.
We see that when the clipping threshold is either too large or too small,
validation performance of the model tends to suffer
(albeit still better than the baseline).
Similarly,
we see that without any gradient stabilisation, the model's performance rapidly degrades as the adversarial radius increases.
Conversely, performance also suffers when the stablisation is too high.

\begin{figure*}[!t]
    \centering
    \vspace{-0.5cm}
    \subfigure[ImageNet train.]
    {
        \includegraphics[scale=0.2]{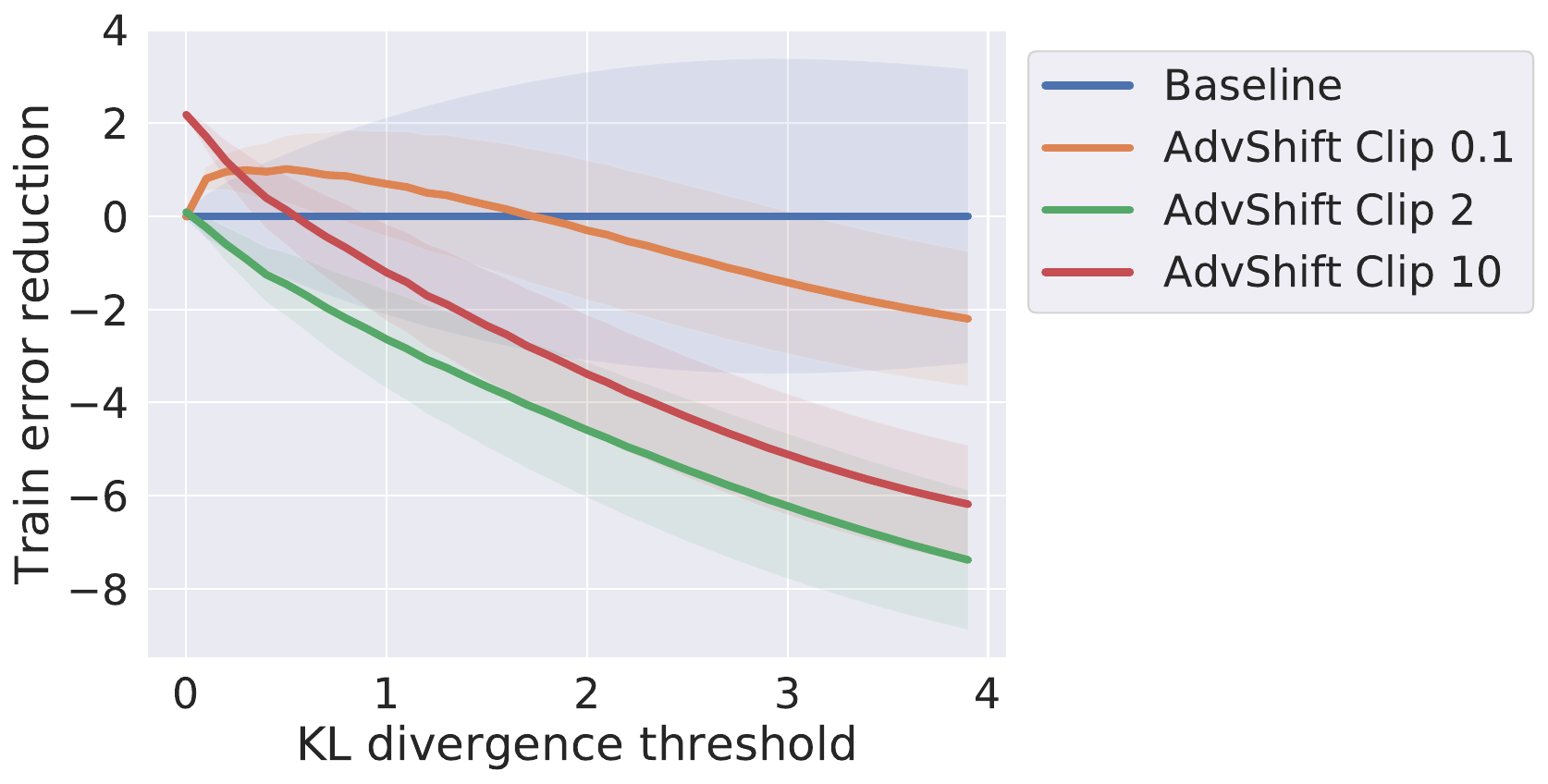}
    }
    \subfigure[ImageNet validation.]
    {
        \includegraphics[scale=0.2]{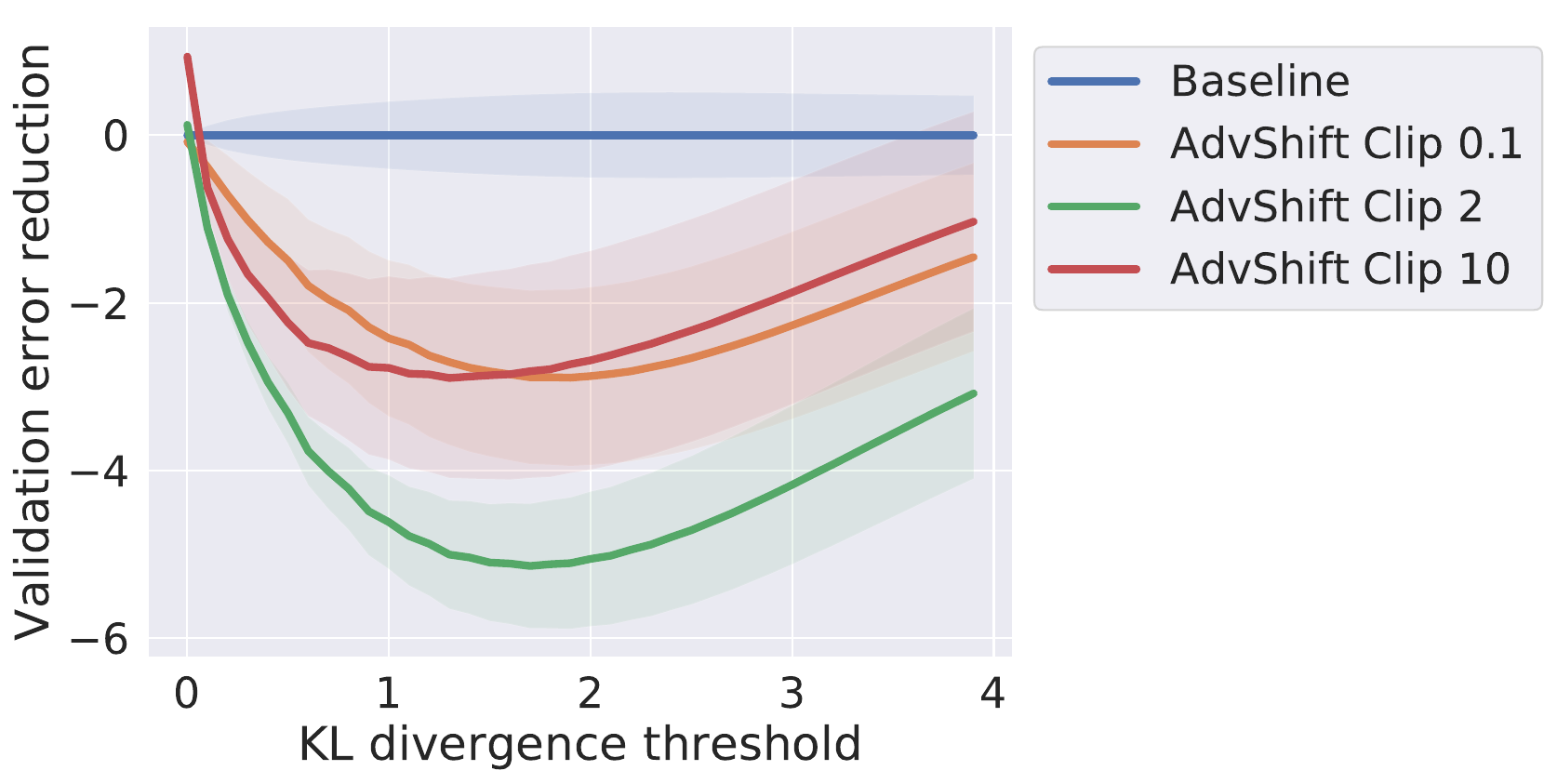}
    }
    
    \caption{
    Ablation of loss clipping threshold.
    We see that when the clipping threshold is either too large or too small,
validation performance of the model tends to suffer.
    }
    \label{fig:clipping_ablation}
\end{figure*}
\begin{figure*}[!t]
    \centering
    
    \subfigure[ImageNet train.]
    {
        \includegraphics[scale=0.2]{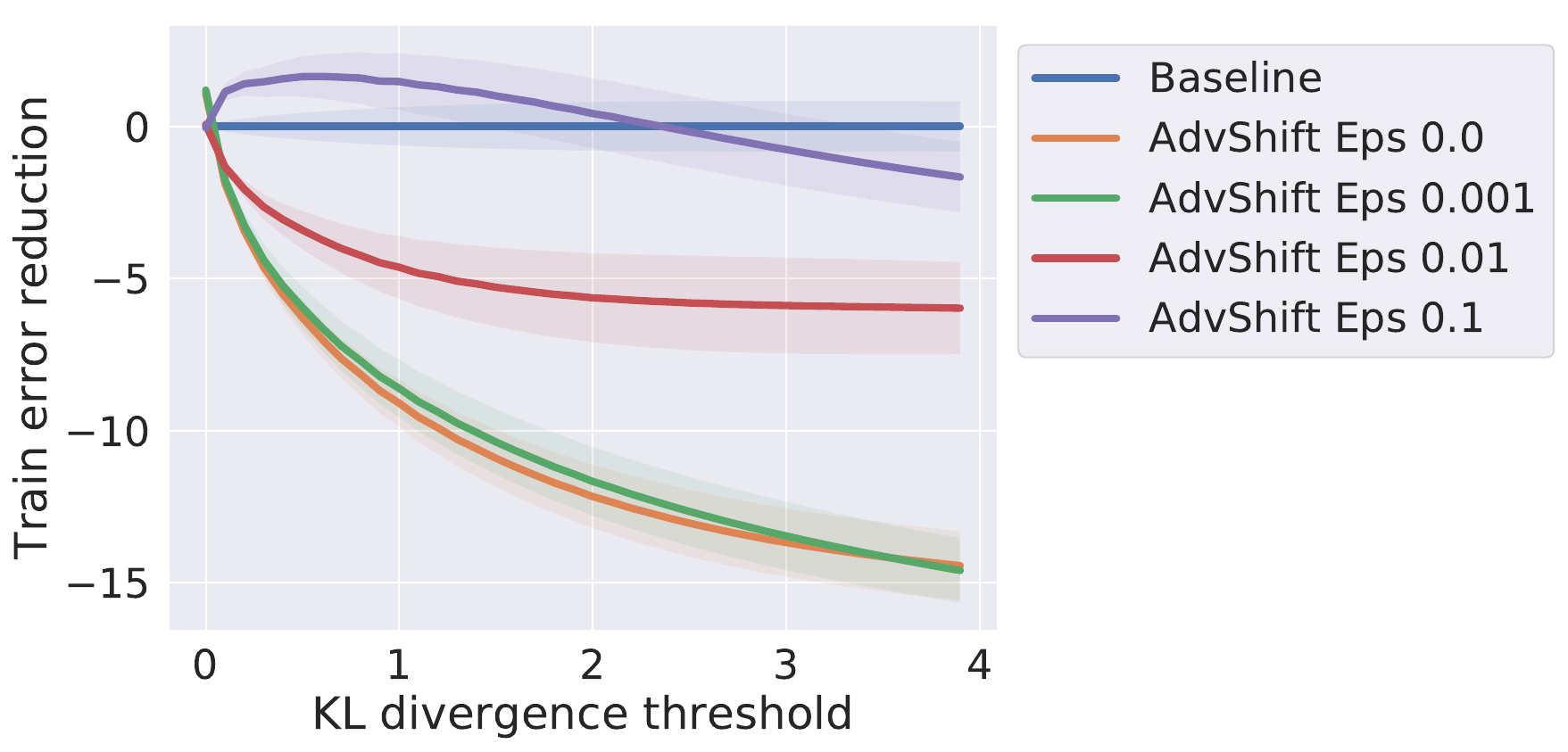}
    }
    \subfigure[ImageNet validation.]
    {
        \includegraphics[scale=0.2]{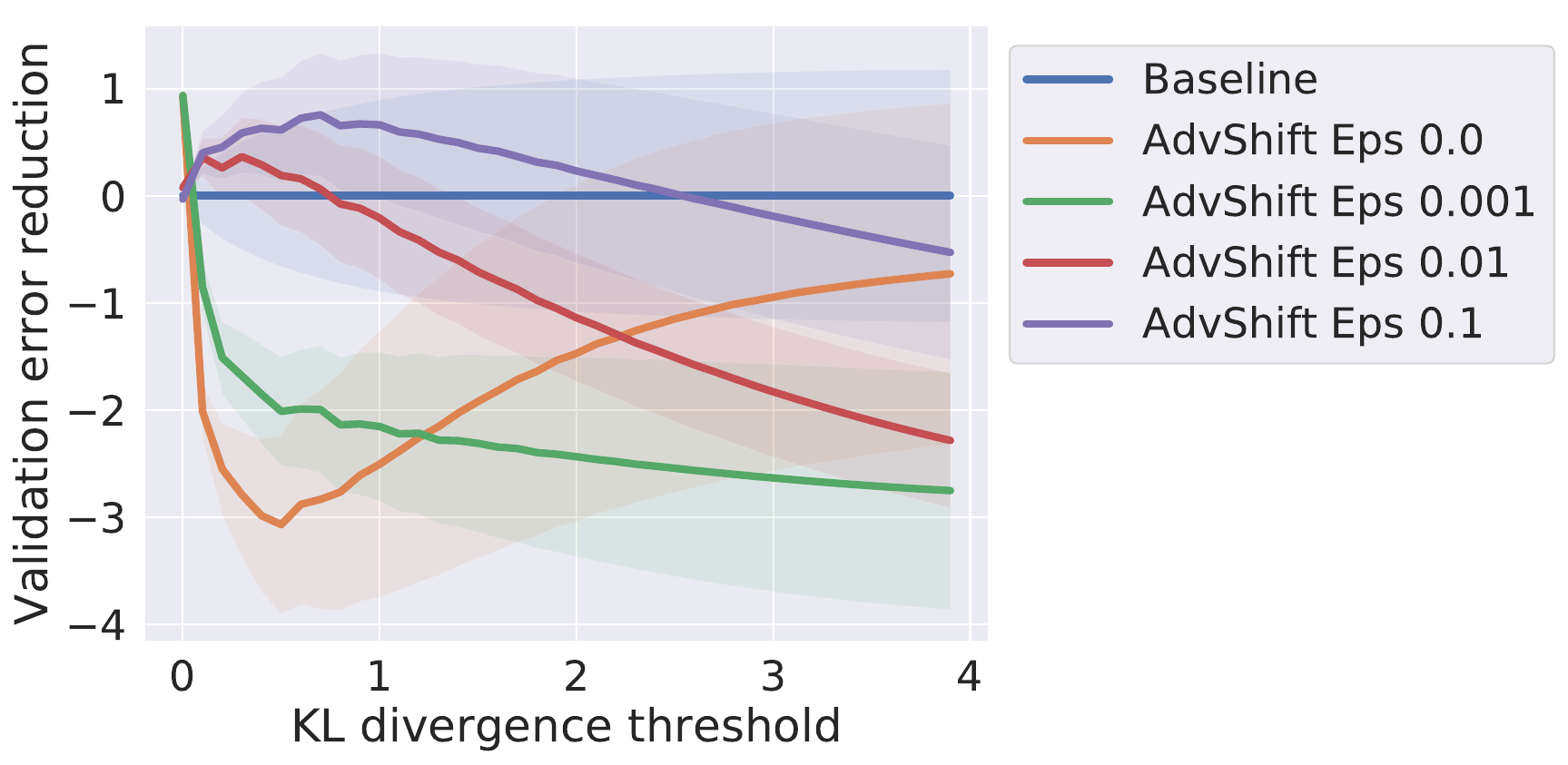}
    }
    
    \caption{
    Ablation of gradient stabilisation parameter $\epsilon$,
    which is a constant added to the gradient updates to prevent iterates from reaching the vertices of the simplex.
    We see that without any gradient stabilisation, the model's performance rapidly degrades as the adversarial radius increases.
Conversely, performance also suffers when the stablisation is too high.
    }
    \label{fig:eps_ablation}
\end{figure*}

In summary, our experiments show that our proposed DRO formulation can be effectively solved with \algname{}, and results in a model that is robust to adversarial label shift.

\section{Discussion and future work}
We proposed \algname{}, 
an algorithm for coping with label shift based on distributionally robust optimisation,
and
illustrated its effectiveness of real-world datasets.
Despite this, our approach does not solve the problem fully. 
First, Figure~\ref{fig:imagenet_main}(a)(b) shows that the generalization gap increases as the perturbation radius increases. 
Understanding why there is a correlation between hard examples and bad generalization could improve robustness. 
Second, Figure~\ref{fig:imagenet_main}(a) shows that even on the train set, the algorithm threshold $r$ does not translate to the model's level of robustness. 
We conjecture this results from the interplay of model expressivity and data distribution,
whose future study is of interest. 

\section{Acknowledgement}

We thank Tianbao Yang for pointing out an incorrect claim in our work and referring us to \citep{rafique2021weaklyconvex} for an algorithm that achieves a better theoretical guarantee in optimising nonconvex-concave functions.

\bibliography{references}

\begin{thebibliography}{50}
\providecommand{\natexlab}[1]{#1}
\providecommand{\url}[1]{\texttt{#1}}
\expandafter\ifx\csname urlstyle\endcsname\relax
  \providecommand{\doi}[1]{doi: #1}\else
  \providecommand{\doi}{doi: \begingroup \urlstyle{rm}\Url}\fi

\bibitem[Alaiz-Rodr\'{\i}guez et~al.(2007)Alaiz-Rodr\'{\i}guez,
  Guerrero-Curieses, and Cid-Sueiro]{Rodriguez:2007}
Roc\'{\i}o Alaiz-Rodr\'{\i}guez, Alicia Guerrero-Curieses, and Jes\'{u}s
  Cid-Sueiro.
\newblock Minimax regret classifier for imprecise class distributions.
\newblock \emph{Journal of Machine Learning Research}, 8:\penalty0 103--130,
  May 2007.
\newblock ISSN 1532-4435.

\bibitem[Arjovsky et~al.(2019)Arjovsky, Bottou, Gulrajani, and
  Lopez-Paz]{Arjovsky:2019}
Martin Arjovsky, L{\'e}on Bottou, Ishaan Gulrajani, and David Lopez-Paz.
\newblock Invariant risk minimization, 2019.

\bibitem[Azizzadenesheli et~al.(2019)Azizzadenesheli, Liu, Yang, and
  Anandkumar]{Azizzadenesheli:2018}
Kamyar Azizzadenesheli, Anqi Liu, Fanny Yang, and Animashree Anandkumar.
\newblock Regularized learning for domain adaptation under label shifts.
\newblock In \emph{International Conference on Learning Representations}, 2019.

\bibitem[Ben-David et~al.(2007)Ben-David, Blitzer, Crammer, and
  Pereira]{ben2007analysis}
Shai Ben-David, John Blitzer, Koby Crammer, and Fernando Pereira.
\newblock Analysis of representations for domain adaptation.
\newblock In \emph{Advances in neural information processing systems}, pp.\
  137--144, 2007.

\bibitem[Bubeck(2014)]{bubeck2014convex}
S{\'e}bastien Bubeck.
\newblock Convex optimization: Algorithms and complexity, 2014.

\bibitem[Byrd \& Lipton(2019)Byrd and Lipton]{Byrd:2019}
Jonathon Byrd and Zachary~Chase Lipton.
\newblock What is the effect of importance weighting in deep learning?
\newblock In Kamalika Chaudhuri and Ruslan Salakhutdinov (eds.),
  \emph{Proceedings of the 36th International Conference on Machine Learning,
  {ICML} 2019, 9-15 June 2019, Long Beach, California, {USA}}, volume~97 of
  \emph{Proceedings of Machine Learning Research}, pp.\  872--881. {PMLR},
  2019.

\bibitem[Cao et~al.(2019)Cao, Wei, Gaidon, Arechiga, and Ma]{Cao:2019}
Kaidi Cao, Colin Wei, Adrien Gaidon, Nikos Arechiga, and Tengyu Ma.
\newblock Learning imbalanced datasets with label-distribution-aware margin
  loss.
\newblock In \emph{Advances in Neural Information Processing Systems}, 2019.

\bibitem[Combes et~al.(2020)Combes, Zhao, Wang, and Gordon]{combes2020domain}
Remi Tachet~des Combes, Han Zhao, Yu-Xiang Wang, and Geoff Gordon.
\newblock Domain adaptation with conditional distribution matching and
  generalized label shift.
\newblock \emph{arXiv preprint arXiv:2003.04475}, 2020.

\bibitem[Combettes \& Pesquet(2011)Combettes and
  Pesquet]{combettes2011proximal}
Patrick~L Combettes and Jean-Christophe Pesquet.
\newblock Proximal splitting methods in signal processing.
\newblock In \emph{Fixed-point algorithms for inverse problems in science and
  engineering}, pp.\  185--212. Springer, 2011.

\bibitem[Cui et~al.(2019)Cui, Jia, Lin, Song, and Belongie]{Cui:2019}
Yin Cui, Menglin Jia, Tsung-Yi Lin, Yang Song, and Serge Belongie.
\newblock Class-balanced loss based on effective number of samples.
\newblock In \emph{CVPR}, 2019.

\bibitem[Curi et~al.(2019)Curi, Levy, Jegelka, and Krause]{Curi:2019}
Sebastian Curi, Kfir.~Y. Levy, Stefanie Jegelka, and Andreas Krause.
\newblock Adaptive sampling for stochastic risk-averse learning, 2019.

\bibitem[{Davenport} et~al.(2010){Davenport}, {Baraniuk}, and
  {Scott}]{Davenport:2010}
M.~A. {Davenport}, R.~G. {Baraniuk}, and C.~D. {Scott}.
\newblock Tuning support vector machines for minimax and neyman-pearson
  classification.
\newblock \emph{IEEE Transactions on Pattern Analysis and Machine
  Intelligence}, 32\penalty0 (10):\penalty0 1888--1898, 2010.

\bibitem[Davis \& Drusvyatskiy(2019)Davis and
  Drusvyatskiy]{davis2019stochastic}
Damek Davis and Dmitriy Drusvyatskiy.
\newblock Stochastic model-based minimization of weakly convex functions.
\newblock \emph{SIAM Journal on Optimization}, 29\penalty0 (1):\penalty0
  207--239, 2019.

\bibitem[Donini et~al.(2018)Donini, Oneto, Ben-David, Shawe-Taylor, and
  Pontil]{Donini:2018}
Michele Donini, Luca Oneto, Shai Ben-David, John Shawe-Taylor, and Massimiliano
  Pontil.
\newblock Empirical risk minimization under fairness constraints.
\newblock In \emph{Proceedings of the 32nd International Conference on Neural
  Information Processing Systems}, NIPS'18, pp.\  2796--2806, Red Hook, NY,
  USA, 2018. Curran Associates Inc.

\bibitem[{d}u Plessis \& Sugiyama(2014){d}u Plessis and Sugiyama]{du2014semi}
Marthinus~Christoffel {d}u Plessis and Masashi Sugiyama.
\newblock Semi-supervised learning of class balance under class-prior change by
  distribution matching.
\newblock \emph{Neural Networks}, 50:\penalty0 110--119, 2014.

\bibitem[Duchi \& Namkoong(2018)Duchi and Namkoong]{Duchi:2018}
J.~Duchi and H.~Namkoong.
\newblock Learning models with uniform performance via distributionally robust
  optimization.
\newblock \emph{arXiv preprint arXiv:1810.08750}, 2018.

\bibitem[Duchi et~al.(2020)Duchi, Hashimoto, and Namkoong]{Duchi:2020}
John Duchi, Tatsunori Hashimoto, and Hongseok Namkoong.
\newblock Distributionally robust losses for latent covariate mixtures, 2020.

\bibitem[Elkan(2001)]{Elkan:2001}
Charles Elkan.
\newblock The foundations of cost-sensitive learning.
\newblock In \emph{Proceedings of the 17th International Joint Conference on
  Artificial Intelligence - Volume 2}, IJCAI'01, pp.\  973--978, San Francisco,
  CA, USA, 2001. Morgan Kaufmann Publishers Inc.
\newblock ISBN 1558608125.

\bibitem[Fan et~al.(2017)Fan, Lyu, Ying, and Hu]{fan2017learning}
Yanbo Fan, Siwei Lyu, Yiming Ying, and Baogang Hu.
\newblock Learning with average top-k loss.
\newblock In \emph{Advances in neural information processing systems}, pp.\
  497--505, 2017.

\bibitem[Faury et~al.(2020)Faury, Tanielian, Dohmatob, Smirnova, and
  Vasile]{faury2020distributionally}
Louis Faury, Ugo Tanielian, Elvis Dohmatob, Elena Smirnova, and Flavian Vasile.
\newblock Distributionally robust counterfactual risk minimization.
\newblock In \emph{Proceedings of the AAAI Conference on Artificial
  Intelligence}, volume~34, pp.\  3850--3857, 2020.

\bibitem[Garg et~al.(2020)Garg, Wu, Balakrishnan, and Lipton]{Garg:2020}
Saurabh Garg, Yifan Wu, Sivaraman Balakrishnan, and Zachary~C. Lipton.
\newblock A unified view of label shift estimation, 2020.

\bibitem[Goodfellow et~al.(2014)Goodfellow, Pouget{-}Abadie, Mirza, Xu,
  Warde{-}Farley, Ozair, Courville, and Bengio]{Goodfellow:2014}
Ian~J. Goodfellow, Jean Pouget{-}Abadie, Mehdi Mirza, Bing Xu, David
  Warde{-}Farley, Sherjil Ozair, Aaron~C. Courville, and Yoshua Bengio.
\newblock Generative adversarial nets.
\newblock In Zoubin Ghahramani, Max Welling, Corinna Cortes, Neil~D. Lawrence,
  and Kilian~Q. Weinberger (eds.), \emph{Advances in Neural Information
  Processing Systems 27: Annual Conference on Neural Information Processing
  Systems 2014, December 8-13 2014, Montreal, Quebec, Canada}, pp.\
  2672--2680, 2014.

\bibitem[Guo et~al.(2020)Guo, Gong, Liu, Zhang, and Tao]{guo2020ltf}
Jiaxian Guo, Mingming Gong, Tongliang Liu, Kun Zhang, and Dacheng Tao.
\newblock Ltf: A label transformation framework for correcting label shift.
\newblock In \emph{International Conference on Machine Learning}, pp.\
  3843--3853. PMLR, 2020.

\bibitem[Hashimoto et~al.(2018)Hashimoto, Srivastava, Namkoong, and
  Liang]{Hashimoto:2018}
Tatsunori Hashimoto, Megha Srivastava, Hongseok Namkoong, and Percy Liang.
\newblock Fairness without demographics in repeated loss minimization.
\newblock In Jennifer Dy and Andreas Krause (eds.), \emph{International
  Conference on Machine Learning}, volume~80 of \emph{Proceedings of Machine
  Learning Research}, pp.\  1929--1938, Stockholmsm{\"a}ssan, Stockholm Sweden,
  10--15 Jul 2018. PMLR.

\bibitem[He \& Garcia(2009)He and Garcia]{He:2009}
Haibo He and Edwardo~A. Garcia.
\newblock Learning from imbalanced data.
\newblock \emph{IEEE Transactions on Knowledge and Data Engineering},
  21\penalty0 (9):\penalty0 1263--1284, 2009.

\bibitem[Hu et~al.(2018)Hu, Niu, Sato, and Sugiyama]{Hu:2018}
Weihua Hu, Gang Niu, Issei Sato, and Masashi Sugiyama.
\newblock Does distributionally robust supervised learning give robust
  classifiers?
\newblock In Jennifer~G. Dy and Andreas Krause (eds.), \emph{Proceedings of the
  35th International Conference on Machine Learning, {ICML} 2018,
  Stockholmsm{\"{a}}ssan, Stockholm, Sweden, July 10-15, 2018}, volume~80 of
  \emph{Proceedings of Machine Learning Research}, pp.\  2034--2042. {PMLR},
  2018.

\bibitem[Kang et~al.(2020)Kang, Xie, Rohrbach, Yan, Gordo, Feng, and
  Kalantidis]{Kang:2020}
Bingyi Kang, Saining Xie, Marcus Rohrbach, Zhicheng Yan, Albert Gordo, Jiashi
  Feng, and Yannis Kalantidis.
\newblock Decoupling representation and classifier for long-tailed recognition.
\newblock In \emph{Eighth International Conference on Learning Representations
  (ICLR)}, 2020.

\bibitem[Kouw \& Loog(2018)Kouw and Loog]{Kouw:2018}
Wouter~M. Kouw and Marco Loog.
\newblock An introduction to domain adaptation and transfer learning.
\newblock \emph{CoRR}, abs/1812.11806, 2018.
\newblock URL \url{http://arxiv.org/abs/1812.11806}.

\bibitem[Levy et~al.(2020)Levy, Carmon, Duchi, and Sidford]{levy2020large}
Daniel Levy, Yair Carmon, John~C Duchi, and Aaron Sidford.
\newblock Large-scale methods for distributionally robust optimization.
\newblock \emph{Advances in Neural Information Processing Systems}, 33, 2020.

\bibitem[Lin et~al.(2019)Lin, Jin, and Jordan]{lin2019gradient}
Tianyi Lin, Chi Jin, and Michael~I. Jordan.
\newblock On gradient descent ascent for nonconvex-concave minimax problems,
  2019.

\bibitem[Lin et~al.(2020)Lin, Jin, and Jordan]{lin2020nearoptimal}
Tianyi Lin, Chi Jin, and Michael.~I. Jordan.
\newblock Near-optimal algorithms for minimax optimization, 2020.

\bibitem[Lipton et~al.(2018)Lipton, Wang, and Smola]{Lipton:2018}
Zachary Lipton, Yu-Xiang Wang, and Alexander Smola.
\newblock Detecting and correcting for label shift with black box predictors.
\newblock In Jennifer Dy and Andreas Krause (eds.), \emph{International
  Conference on Machine Learning}, volume~80 of \emph{Proceedings of Machine
  Learning Research}, pp.\  3122--3130, Stockholmsm{\"a}ssan, Stockholm Sweden,
  10--15 Jul 2018. PMLR.

\bibitem[Mohri et~al.(2019)Mohri, Sivek, and Suresh]{Mohri:2019}
Mehryar Mohri, Gary Sivek, and Ananda~Theertha Suresh.
\newblock Agnostic federated learning.
\newblock In \emph{International Conference on Machine Learning}, 2019.

\bibitem[Muandet et~al.(2013)Muandet, Balduzzi, and Schölkopf]{Muandet:2013}
Krikamol Muandet, David Balduzzi, and Bernhard Schölkopf.
\newblock Domain generalization via invariant feature representation.
\newblock In Sanjoy Dasgupta and David McAllester (eds.), \emph{International
  Conference on Machine Learning}, Proceedings of Machine Learning Research,
  pp.\  10--18, Atlanta, Georgia, USA, 17--19 Jun 2013. PMLR.

\bibitem[Namkoong \& Duchi(2016)Namkoong and Duchi]{Namkoong:2016}
H.~Namkoong and J.~Duchi.
\newblock Stochastic gradient methods for distributionally robust optimization
  with f-divergences.
\newblock In \emph{Advances in Neural Information Processing Systems
  (NeurIPS)}, 2016.

\bibitem[Namkoong \& Duchi(2017)Namkoong and Duchi]{Namkoong:2017}
Hongseok Namkoong and John~C Duchi.
\newblock Variance-based regularization with convex objectives.
\newblock In I.~Guyon, U.~V. Luxburg, S.~Bengio, H.~Wallach, R.~Fergus,
  S.~Vishwanathan, and R.~Garnett (eds.), \emph{Advances in Neural Information
  Processing Systems 30}, pp.\  2971--2980. Curran Associates, Inc., 2017.

\bibitem[Qi et~al.(2020)Qi, Guo, Xu, Jin, and Yang]{qi2020online}
Qi~Qi, Zhishuai Guo, Yi~Xu, Rong Jin, and Tianbao Yang.
\newblock An online method for distributionally deep robust optimization, 2020.

\bibitem[Rafique et~al.(2021)Rafique, Liu, Lin, and
  Yang]{rafique2021weaklyconvex}
Hassan Rafique, Mingrui Liu, Qihang Lin, and Tianbao Yang.
\newblock Weakly-convex concave min-max optimization: Provable algorithms and
  applications in machine learning, 2021.

\bibitem[Rahimian \& Mehrotra(2019)Rahimian and Mehrotra]{Rahimian:2019}
Hamed Rahimian and Sanjay Mehrotra.
\newblock Distributionally robust optimization: A review, 2019.

\bibitem[Saerens et~al.(2002)Saerens, Latinne, and Decaestecker]{Saerens:2002}
Marco Saerens, Patrice Latinne, and Christine Decaestecker.
\newblock Adjusting the outputs of a classifier to new a priori probabilities:
  A simple procedure.
\newblock \emph{Neural Computation}, 14\penalty0 (1):\penalty0 21--41, 2002.

\bibitem[Sagawa et~al.(2020)Sagawa, Koh, Hashimoto, and Liang]{Sagawa:2020}
S.~Sagawa, P.~W. Koh, T.~B. Hashimoto, and P.~Liang.
\newblock Distributionally robust neural networks for group shifts: On the
  importance of regularization for worst-case generalization.
\newblock In \emph{International Conference on Learning Representations
  (ICLR)}, 2020.

\bibitem[Shapiro et~al.(2014)Shapiro, Dentcheva, and
  Ruszczyński]{Shapiro:2014}
Alexander Shapiro, Darinka Dentcheva, and Andrzej Ruszczyński.
\newblock \emph{Lectures on Stochastic Programming: Modeling and Theory, Second
  Edition}.
\newblock Society for Industrial and Applied Mathematics, Philadelphia, PA,
  2014.
\newblock \doi{10.1137/1.9781611973433}.

\bibitem[Sinha et~al.(2017)Sinha, Namkoong, and Duchi]{sinha2017certifying}
Aman Sinha, Hongseok Namkoong, and John Duchi.
\newblock Certifying some distributional robustness with principled adversarial
  training.
\newblock \emph{arXiv preprint arXiv:1710.10571}, 2017.

\bibitem[Sinha et~al.(2018)Sinha, Namkoong, and Duchi]{Sinha:2018}
Aman Sinha, Hongseok Namkoong, and John Duchi.
\newblock Certifiable distributional robustness with principled adversarial
  training.
\newblock In \emph{International Conference on Learning Representations}, 2018.

\bibitem[Storkey \& Sugiyama(2007)Storkey and Sugiyama]{Storkey:2007}
Amos~J Storkey and Masashi Sugiyama.
\newblock Mixture regression for covariate shift.
\newblock In B.~Sch\"{o}lkopf, J.~C. Platt, and T.~Hoffman (eds.),
  \emph{Advances in Neural Information Processing Systems 19}, pp.\
  1337--1344. MIT Press, 2007.

\bibitem[Thekumparampil et~al.(2019)Thekumparampil, Jain, Netrapalli, and
  Oh]{thekumparampil2019efficient}
Kiran~Koshy Thekumparampil, Prateek Jain, Praneeth Netrapalli, and Sewoong Oh.
\newblock Efficient algorithms for smooth minimax optimization, 2019.

\bibitem[Williamson \& Menon(2019)Williamson and Menon]{Williamson:2019}
Robert~C. Williamson and Aditya~Krishna Menon.
\newblock Fairness risk measures.
\newblock In \emph{Proceedings of the 36th International Conference on Machine
  Learning, {ICML} 2019, 9-15 June 2019, Long Beach, California, {USA}}, pp.\
  6786--6797, 2019.

\bibitem[Xie \& Manski(1989)Xie and Manski]{Xie:1989}
Yu~Xie and Charles~F. Manski.
\newblock The logit model and response-based samples.
\newblock \emph{Sociological Methods \& Research}, 17\penalty0 (3):\penalty0
  283--302, 1989.

\bibitem[Zafar et~al.(2017)Zafar, Valera, Gomez~Rodriguez, and
  Gummadi]{Zafar:2016}
Muhammad~Bilal Zafar, Isabel Valera, Manuel Gomez~Rodriguez, and Krishna~P.
  Gummadi.
\newblock Fairness beyond disparate treatment \& disparate impact: Learning
  classification without disparate mistreatment.
\newblock In \emph{Proceedings of the 26th International Conference on World
  Wide Web}, WWW '17, pp.\  1171--1180, Republic and Canton of Geneva, CHE,
  2017. International World Wide Web Conferences Steering Committee.
\newblock ISBN 9781450349130.

\bibitem[Zhang et~al.(2013)Zhang, Sch\"{o}lkopf, Muandet, and Wang]{Kun:2013}
Kun Zhang, Bernhard Sch\"{o}lkopf, Krikamol Muandet, and Zhikun Wang.
\newblock Domain adaptation under target and conditional shift.
\newblock In \emph{Proceedings of the 30th International Conference on
  International Conference on Machine Learning - Volume 28}, ICML'13, pp.\
  III–819–III–827. JMLR.org, 2013.

\end{thebibliography}
\bibliographystyle{iclr2021_conference}

\clearpage
\appendix
\section{Related problems}
\label{sec:other-related}
\textbf{Example-level DRO}.
Existing work on DRO has largely focussed on the setting where $\PCal$ encompasses shifts in the instance space~\citep{Namkoong:2016,Namkoong:2017,Sinha:2018,Duchi:2018,levy2020large}.
This notion of robustness has a natural link with adversarial training~\citep{sinha2017certifying},
and
involves a more challenging problem, as it requires parameterising the adversary's distribution.
\citet{Hu:2018} illustrate the potential pitfalls of DRO, owing to a mismatch between surrogate and 0-1 losses.
They also propose to encode an uncertainty set based on \emph{latent} label distribution shift~\citep{Storkey:2007},
which requires domain knowledge. The techniques in example-level DRO are mostly designed for small scale dataset with SVM models, as these techniques require sampling according to adversarial distribution, which can be very unstable if implemented with importance sampling only. It also requires maintaining a vector proportional to the number of labels and indexing each sample during training to match up the sample index, which is not available in most dataloading pipelines.

\textbf{Fairness}.
Adversarial label shift may be related to \emph{algorithmic fairness}.
Abstractly, this concerns the mitigation of systematic bias in predictions on sensitive subgroups (e.g., country of origin).
One fairness criteria
posits that the per-subgroup errors should be equal~\citep{Zafar:2016,Donini:2018},
an ideal that may be targetted by minimising the worst-subgroup error~\citep{Mohri:2019,Sagawa:2020}.
When the subgroups correspond to labels,
ensuring this notion of fairness is tantamount to guarding against an adversary that can place all mass on the worst performing label.

\textbf{GANs}.
GANs~\citep{Goodfellow:2014} involve solving a min-max objective that bears some similarity to the DRO formulation~\eqref{eq:dro},
but is fundamentally different in details:
while DRO considers reweighting of samples according to a fixed family, GANs involve a parameterised adversarial family, with the training objective augmented with an additional penalty.

\textbf{Domain adaptation}.
Label shift can be viewed as a special case of domain adaptation,
where $\pTr$ and $\pTe$ can systematically differ.
Typically, one assumes access to a small sample from $\pTe$,
which may be used to estimate importance weights~\citep{combes2020domain},
or samples from \emph{multiple} domains, which may be used to estimate a generic domain-agnostic representation~\citep{Muandet:2013}.
In causal inference, there has been  interest in similar classes of models~\citep{Arjovsky:2019}.

\section{Algorithm implementation details}

We introduce some additional details in our implementation of \algname{}. 
First, as observed in Section~\ref{sec:unbiased}, our algorithm requires knowing the empirical label distribution. As the exact value is not always available, we estimate the empirical label distribution online for all the experiments presented later in Section~\ref{sec:experiment} using an exponential moving average,
$ \pEmp = \beta \cdot \pEmp + (1-\beta) \cdot p_{\text{batch}}, $
where $p_{\text{batch}}$ is the label distribution in the minibatch.
We set $\beta = 0.999$. The number is set such that the exponential moving average has a half-life roughly equal to the number of iterations in one epoch of ImageNet training using our setup. 

In all the experiments, we set $2\gamma_c\lambda = 1$ in Algorithm~\ref{alg:fshift} for simplicity. For learning the adversarial distribution, we only tune the adversarial learning rate $\eta_\pi$.

\section{Additional experimental results}

\begin{figure*}[!t]
    \centering
    \includegraphics[scale=0.275]{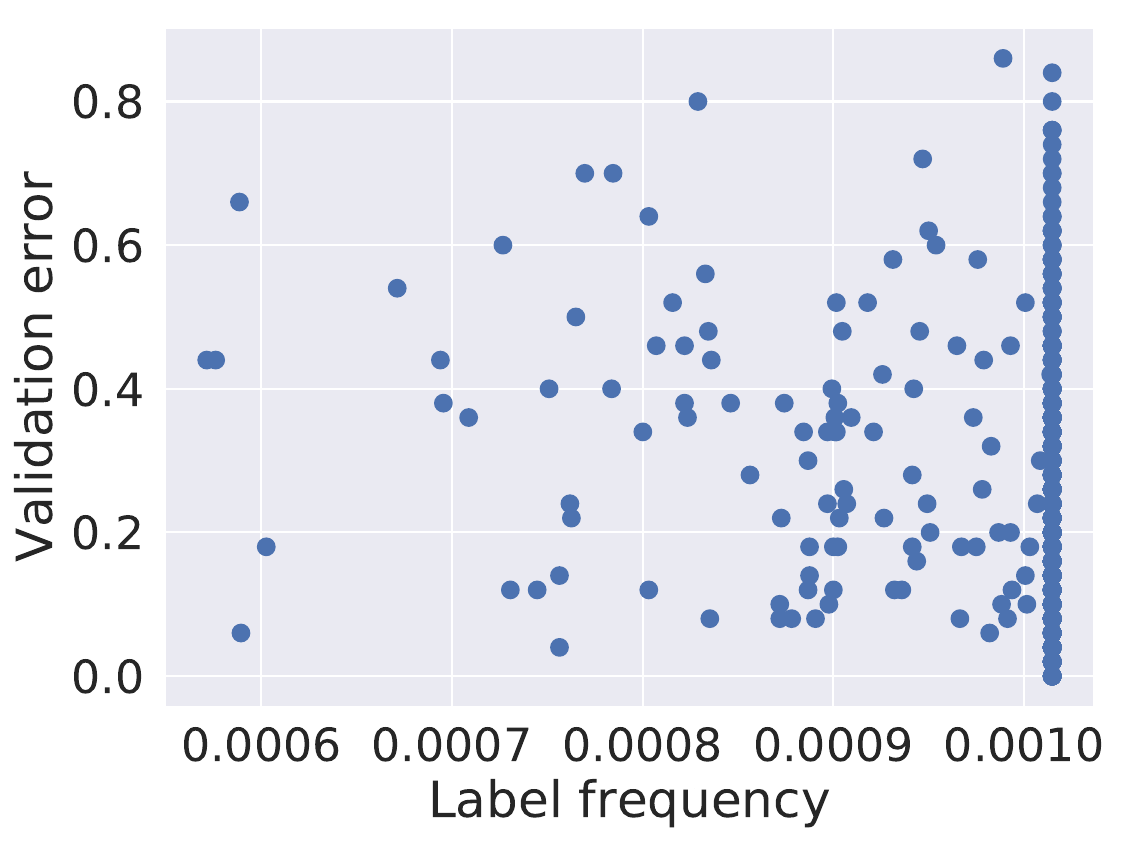}
    \caption{
    Illustration that training label frequency does not strongly correlate with test error.
    Observe that several classes with a high error appear frequently in the training set.
    }
    \label{fig:imagenet_label_freq_vs_error}
\end{figure*}

We present here additional experimental results, including:
\begin{itemize}
    \item for ImageNet, an illustration of the lack of correlation between a label's frequency in the training set, and its validation error. (Figure~\ref{fig:imagenet_label_freq_vs_error})

    \item unnormalised versions of the results on ImageNet shown in the body,
    where we do not subtract the baseline performance from each of the curves;
    this gives a sense of the absolute performance numbers obtained by each method. (Figure~\ref{fig:imagenet_main_unnormalised})
    
    \item an ablation of the loss clipping threshold and gradient stabiliser $\epsilon$ as introduced above. (Figure~\ref{fig:clipping_ablation},\ref{fig:eps_ablation})
    
    \item results on CIFAR-100, to complement those for ImageNet. (Figure~\ref{fig:cifar_main},\ref{fig:cifar_main_unnormalised})
\end{itemize}

\subsection{Balanced labels $\centernot\implies$ balanced performance}

Figure~\ref{fig:imagenet_label_freq_vs_error} shows that training label frequency does not strongly correlate with test error.
Observe that several classes with a high error appear frequently in the training set.
Indeed, the three classes with highest error -- {\tt casette player}, {\tt maillot}, and {\tt water jug} -- all appear an equal number of times in the training set.

\subsection{Unnormalised plots on ImageNet}

Figure~\ref{fig:imagenet_main_unnormalised} presents plots of the unnormalised performance of the various methods compared in the body.
Here, rather than subtract the performance of the baseline, we show the absolute accuracy of each method as the adversarial radius is varied.
Evidently, the baseline and {\sc Agnostic} models tend to suffer in their validation error as the adversarial radius increases.

\begin{figure*}[!t]
    \centering
    
    \subfigure[ImageNet train.]
    {
        \includegraphics[scale=0.225]{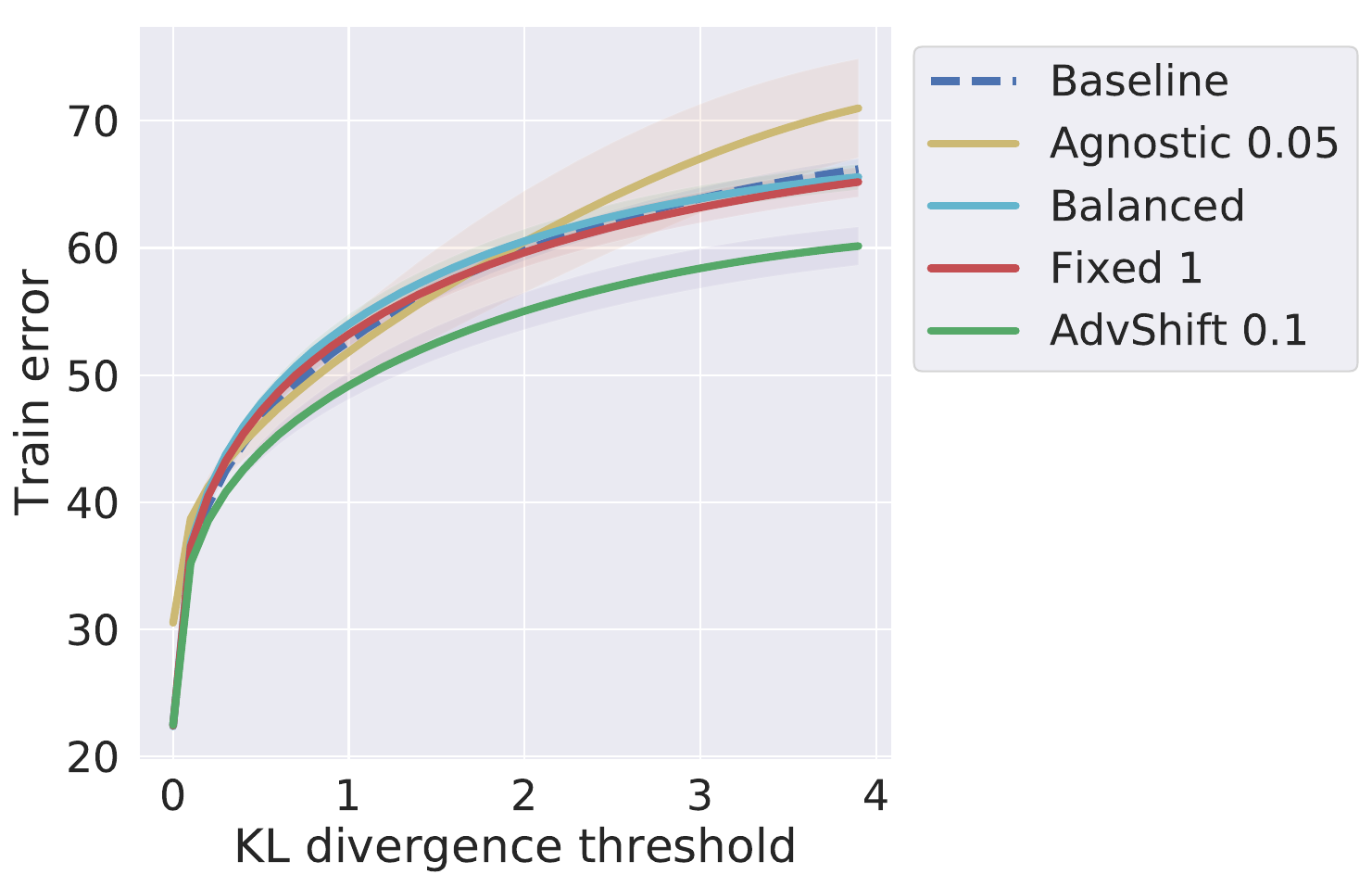}
    }
    \subfigure[ImageNet validation.]
    {
        \includegraphics[scale=0.225]{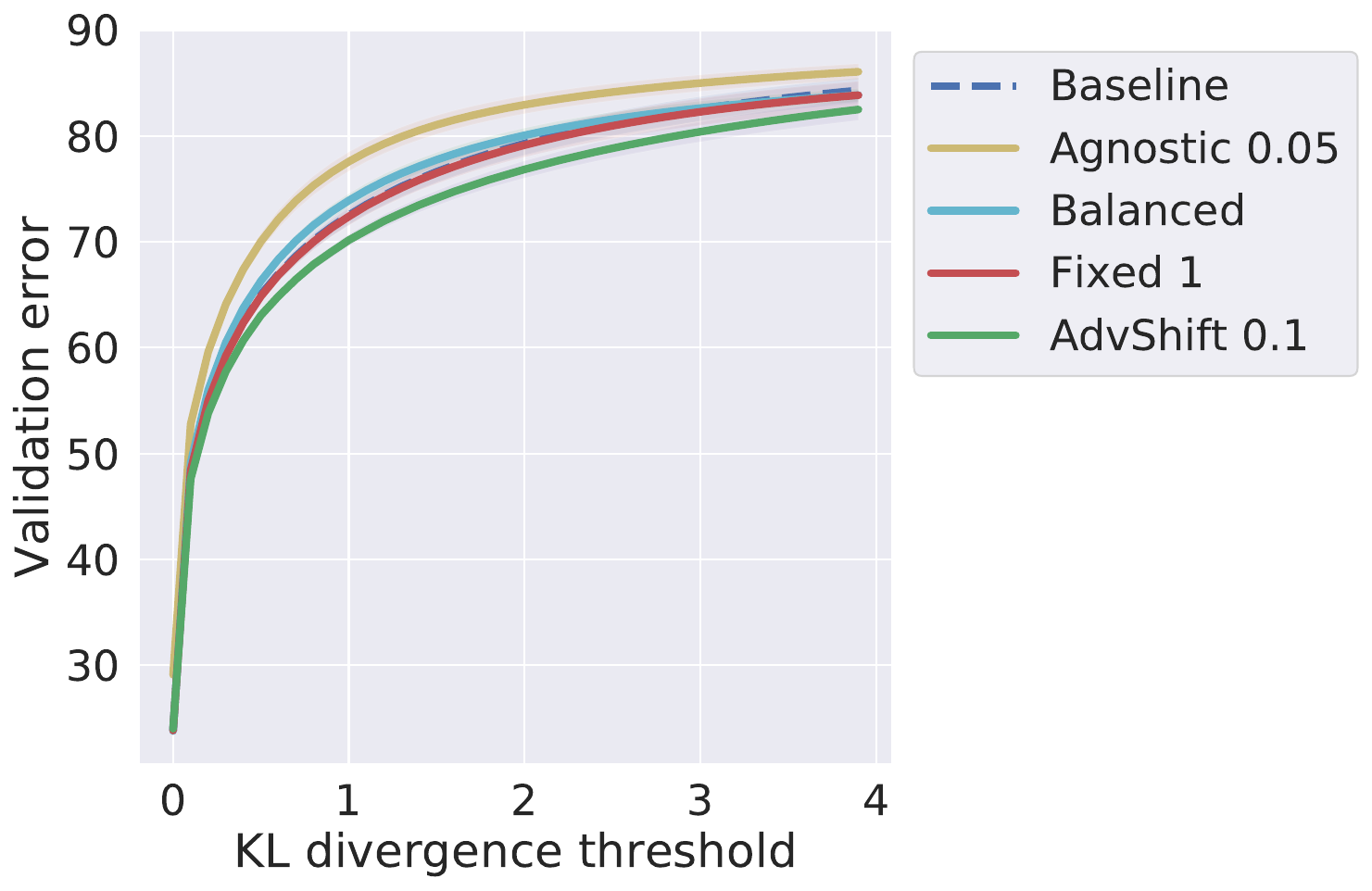}
    }
    
    \caption{
    Comparison of performance of various methods on ImageNet
    under adversarial label distributions.
    For each plot, we vary a KL divergence threshold $\tau$,
    and for a given $\tau$ construct the label distribution which results in maximal test error for the baseline model.
    We then compute the test error under this distribution.
    Note that the case $\tau = 0$ corresponds to using the train distribution,
    while $\tau = +\infty$ corresponds to using the worst-case label distribution, which is concentrated on the worst-performing label.    
    Our proposed {\algname{}} can reduce the adversarial test error by over $\sim2.5\%$ over the {\sf baseline} method.
    }
    \label{fig:imagenet_main_unnormalised}
\end{figure*}

\subsection{Results on CIFAR-100}

Figure~\ref{fig:cifar_main} shows results on CIFAR-100,
where we train various  methods using a CIFAR-ResNet-18 as the underlying architecture,
Here,
we see a consistent and sizable improvement from \algname{} over the baseline method.
On this dataset, {\sc Agnostic} fares better,
and eventually matches the performance of \algname{} with a large adversarial radius.
This is in keeping with the intended use-case of {\sc Agnostic},
i.e.,
minimising the worst-case loss.
Figure~\ref{fig:cifar_main_unnormalised} supplements these plots with unnormalised versions, to illustrate the absolute performance differences.

We remark here that the choice of a CIFAR-ResNet-18 results in an underparameterised model, which does not perfectly fit the training data.
In the overparameterised case,
there are challenges with employing DRO,
as noted by~\citet{Sagawa:2020}.
Addressing these challenges in settings where the training data is balanced remains an interesting open question.

\begin{figure*}[!t]
    \centering
    
    \subfigure[CIFAR-100 train.]
    {
        \includegraphics[scale=0.225]{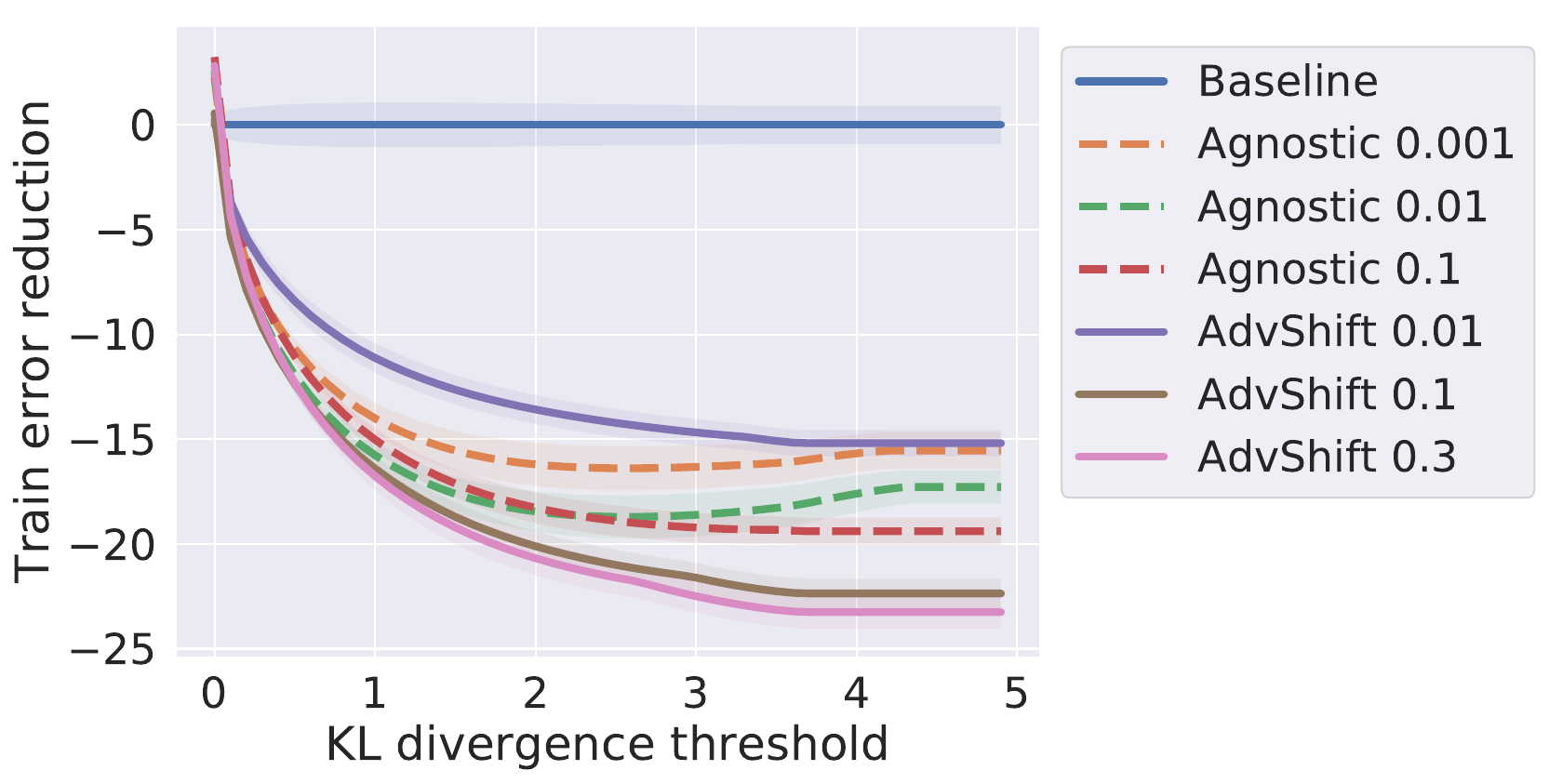}
    }
    \subfigure[CIFAR-100 validation.]
    {
        \includegraphics[scale=0.225]{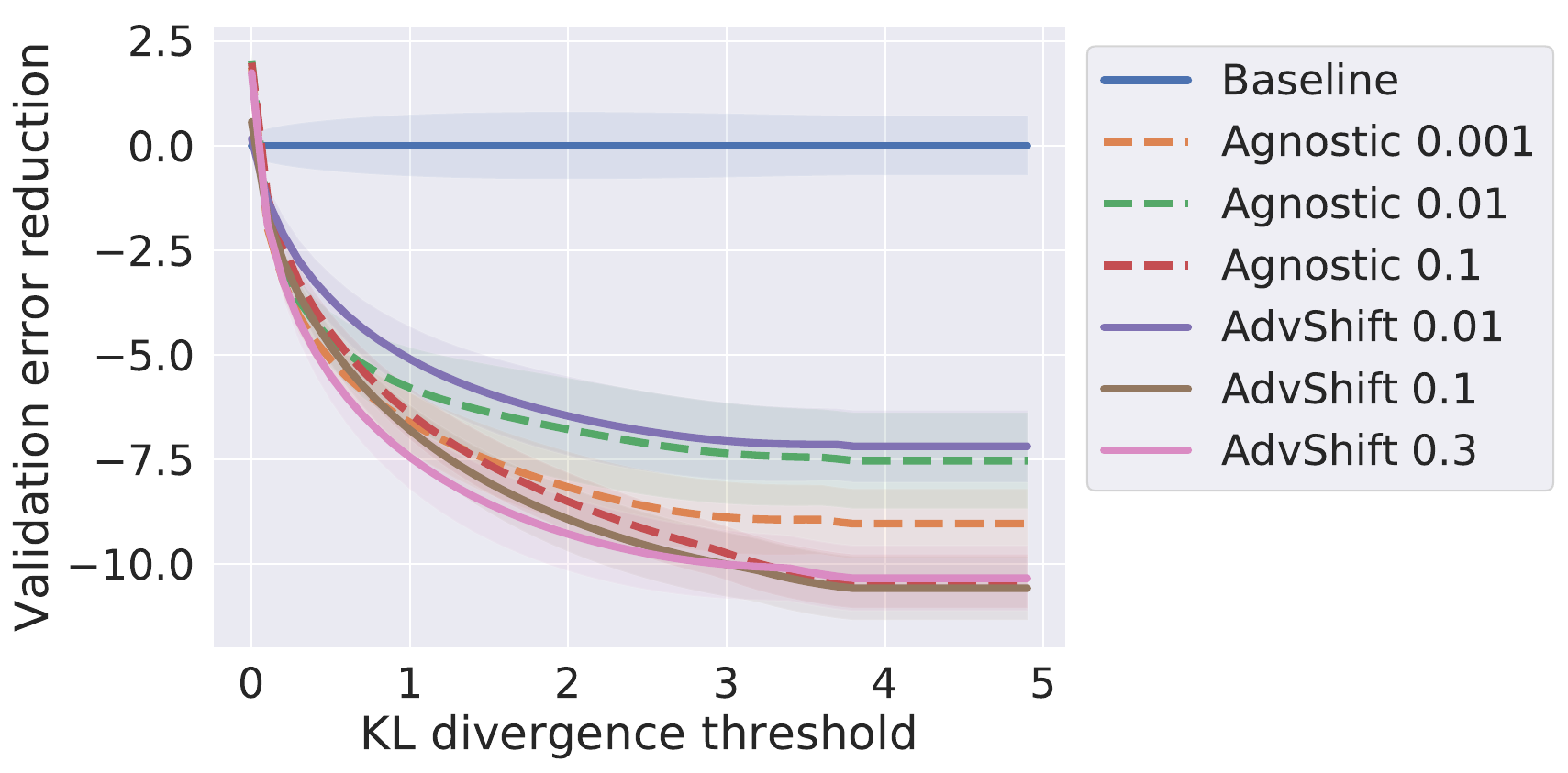}
    }
    
    \caption{
    Comparison of performance of various methods on CIFAR-100.
    }
    \label{fig:cifar_main}
\end{figure*}

\begin{figure*}[!t]
    \centering
    
    \subfigure[CIFAR-100 train.]
    {
        \includegraphics[scale=0.225]{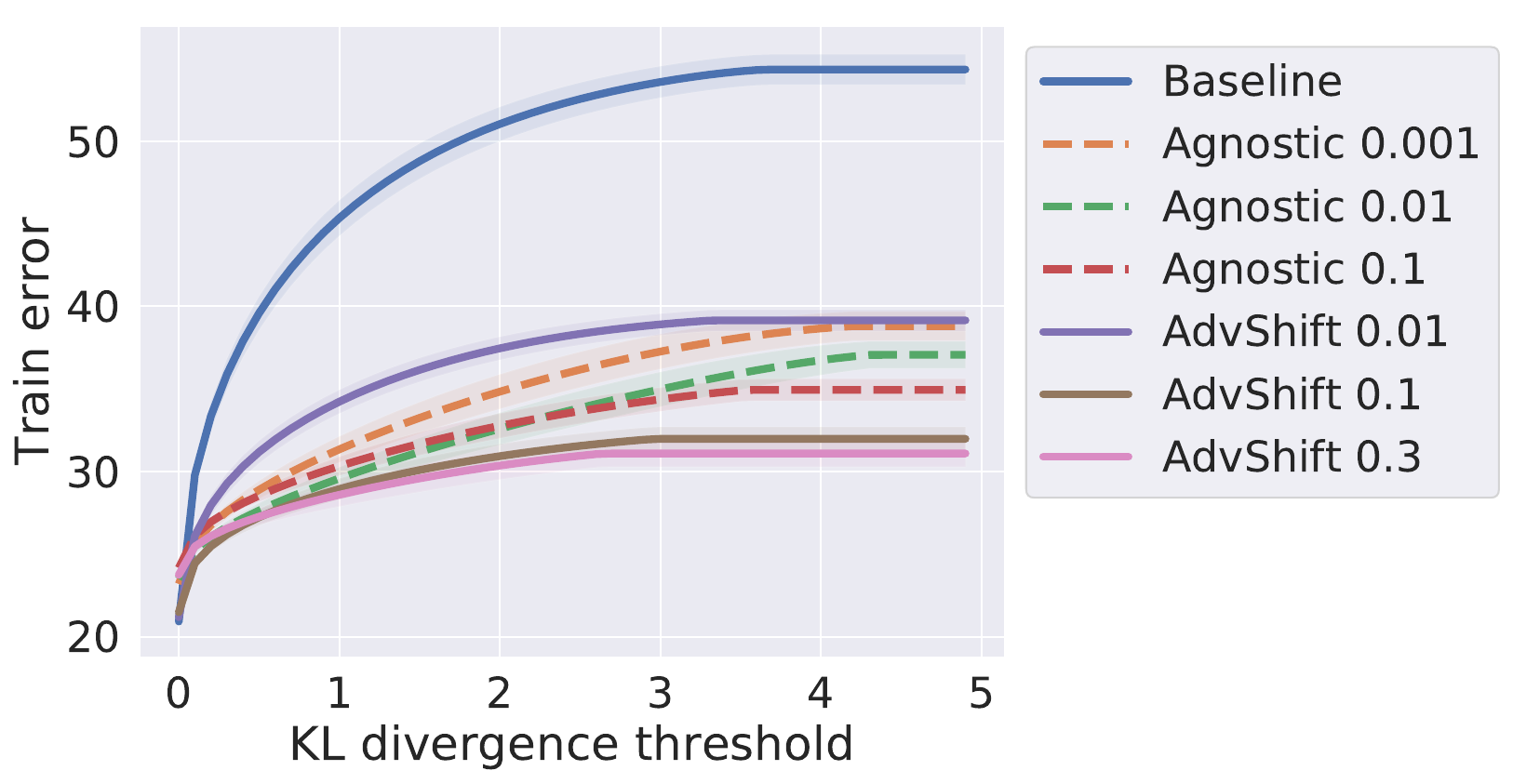}
    }
    \subfigure[CIFAR-100 validation.]
    {
        \includegraphics[scale=0.225]{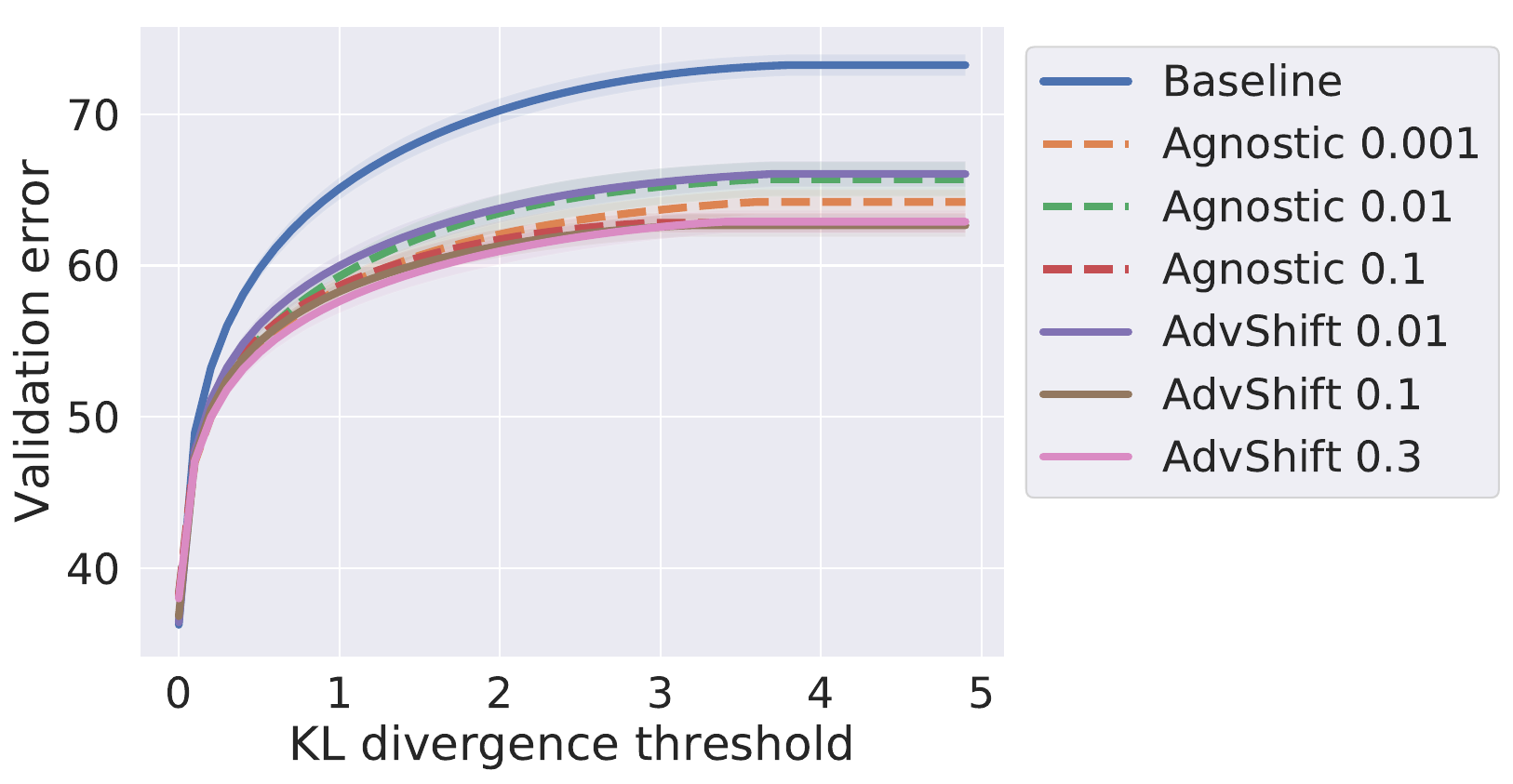}
    }
    
    \caption{
    Comparison of performance of various methods on CIFAR-100 (unnormalised).
    }
    \label{fig:cifar_main_unnormalised}
\end{figure*}

\section{Constrained DRO does not permit a Boltzman solution}
We start with a simple example with three label classes $\{a, b, c\}$ with class losses $l = \{1, 2, 4\}$ respectively. We assume an uniform empirical distribution, i.e. $\pEmp = \{1/3, 1/3, 1/3\}$. We consider two different problems. The first is to find the optimal solution to regularised objective,
$$p = \operatorname{argmin}_p p^\top l + \gamma \mathrm{KL}(p, \pEmp).$$
This problem is well known (e.g. see 2.7.2 of lecture \href{http://www.mit.edu/~rakhlin/papers/online_learning.pdf}) to permit a solution of form $p(x) = \frac{\exp{l_x/t}}{\sum_{x' \in \{a, b, c\}} \exp{l_x'/t} }$ for some $t$. 

In contrast, we show that distributions of the above form does not solve the constrained version of the problem. In particular, we consider the following optimisation problem:
\begin{align*}
      &\max_p  p^\top l \\
    \text{ such that } & KL(p, \pEmp) \le r 
\end{align*}

If the solution is of form $p(x) = \frac{\exp{l_x/t}}{\sum_{x' \in \{a, b, c\}} \exp{l_x'/t} }$, then we know for $l_b \neq l_c$, 
$$(\log(p_a) - \log(p_c))/(\log(p_b) - \log(p_c)) = (l_a - l_c)/(l_b - l_c).$$
We solve the above problem with a convex optimizer using $r = 0.01$ and found $p_a = 0.283, p_b = 0.322, p_c = 0.395$. 
$$(\log(p_a) - \log(p_c))/(\log(p_b) - \log(p_c)) = 1.64 \neq (l_a - l_c)/(l_b - l_c) = 1.5.$$

Note that the above example shows that not all solutions of the contrained problem can be written a Boltzman distribution, i.e. $p(x) = \frac{\exp{l_x/t}}{\sum_{x' \in \{a, b, c\}} \exp{l_x'/t} }$.  Yet, this does not contradict with results [e.g. Lemma 4, \cite{faury2020distributionally}] that claim there is a Boltzman distribution whose function value matches the  optimal value of the constraint problem. Mathematically, we can have $p \neq p'$ but $\E_p[l(x)] = \E_{p'}[l(x)]$.

\section{Projecting an adversarial distribution}\label{sec:projection}

The projection operator in our setting aims to project a distribution $p$ into the set $\mathcal{P} = \{q: \operatorname{KL}(q, \hat{p}) \le r\}$ by solving the following problem:
\begin{align*}
      &\min_q \|q - {p}\|^2 \\
    \text{ such that } & \sum_i q_i \log (q_i/{p}_i) \le r \\
    & \sum_i q_i = 1, \\
    & \forall i, q_i \ge 0,
\end{align*}
where $q_i, p_i$ denotes the $i_{th}$ component of $q, p$ and $n$ denotes number of classes. Given that our implementation is based on Tensorflow, we use the ``trust-region constrained algorithm'' provided by SciPy for easy integration with our python-based training procedure. However, even after extensive tuning, solving each problem up to $1\%$ relative constraint error requires more than $1$ minute when $n=1000$ (the number of labels in ImageNet). This means that if we train ResNet50 on ImageNet for 100k iterations, we need to spend $100k$ minutes on projection operation, which is not affordable.

\section{Proof of Proposition~\ref{thm:gammac}}

\begin{proof}
We only need to show that for large enough $\gamma_c$, any minimiser $p^*$ of the unconstrained problem satisfies that $\mathrm{KL}(p^*, \pEmp) \le r$. Since the distance from boundary of the simplex to any interior point is $+\infty$, we can safely assume that the point lies within the relative interior of the simplex. To prove the proposition, denote $c = \inf\{\|\nabla_p \mathrm{KL}(p, \pEmp) \| \ |\  p \in \Delta^{\numLabels}, s.t. \ \mathrm{KL}(p, \pEmp) > r\}$. By strict convexity of $\mathrm{KL}$ divergence and the fact that $r > 0$, we know $c > 0$. Denote the upper bound of loss as $M$, then when $\gamma_c > M/c$, we know that $\mathrm{KL}(p, \pEmp) > r \implies 0 \not\in \partial_{\pi} ( \E_{\pi}[ \ell(x, y, \theta)] + \min \{0, \gamma_c (r -\mathrm{KL}(\pAdvVec, \pEmp) )\})$. However, since $p$ minimises the objective if and only if $ 0 \in \partial_{\pi} ( \E_{\pi}[ \ell(x, y, \theta)] + \min \{0, \gamma_c (r -\mathrm{KL}(\pAdvVec, \pEmp) )\})$, we have that any minimiser $p$ must satisfy $\mathrm{KL}(\pAdvVec, \pEmp) ) \le r$.
\end{proof}

\section{Proof of Lemma~\ref{lemma:closedform}}
Recall that we want to find $\pi_{k+1}$ that minimises the following objective,
\begin{align*}
   \pAdv_{k+1} 
    &=   
    \underset{\pAdv \in \Delta}{\operatorname{argmin}} \, h(\pAdv) + \frac{1}{2\lambda} ( \mathrm{KL}( \pAdv,  \pAdv_k) + 2\lambda \inner{g_k, \pAdv} ) \\
    &=  \underset{\pAdv \in \Delta}{\operatorname{argmin}} \, \max \left\{0, \frac{\alpha_c}{1+\alpha_c} \mathrm{KL}(\pAdvVec, \pEmp) \right\} + \frac{1}{1+\alpha_c} ( \mathrm{KL}( \pAdv,  \pAdv_k) - \eta \inner{g_k, \pAdv}),
\end{align*}
where $\eta = 1/(2\gamma_c + 1/\lambda)$, $\alpha_c = 2\gamma_c\lambda$. 
Denote $v(i)$ as the  $i_{th}$ component of vector $v$.
Notice that the simplex can be written as a constraint that  $\sum_i \pi(i) = 1; \forall i, \pi(i) \ge 0$. Based on this constraint, we first write \eqref{eq:pi-argmin}'s Lagrangian dual
\begin{multline*}
    L(a, b, \pi) =  \sum_i (a_i \pi(i)) + b (\sum_i \pi(i) - 1)+ \max \left\{ 0,  \frac{\alpha_c}{1+\alpha_c} (\mathrm{KL}(\pAdvVec, \pEmp) - r ) \right\} \\ + \frac{1}{1+\alpha_c} ( \mathrm{KL}( \pAdv,  \pAdv_k) - \eta \inner{g_k, \pAdv})
\end{multline*}
where $\pi(i)$ denotes the $i_{th}$ component of $\pi$. If $\pi_k > 0$ component-wise, then the optimal $\pi$ cannot lie on the boundary (i.e. $\forall i, \pi(i) > 0$), which results in $\operatorname{KL}(\pi, \pi_k) = \infty$. By Lagrangian duality and complementary slackness, we know that for if $\operatorname{KL}(\pi, \pEmp) > r$,
\begin{align*}
    0 = \frac{\partial}{\partial \pi(i)} L(a, b, \pi) =  b  + \frac{\alpha_c}{1+\alpha_c}\log(\pi(i) / \pEmp(i))  + \frac{1}{1+\alpha_c}\log(\pi(i) / \pi_k(i)) - \eta g_k(i) + 1.
\end{align*}

On the other hand, if $\operatorname{KL}(\pi, \pEmp) < r$
\begin{align*}
    0 = \frac{\partial}{\partial \pi(i)} L(a, b, \pi) = b  + \frac{1}{1+\alpha_c} \log(\pi(i) / \pi_k(i)) - \eta g_k(i) + 1.
\end{align*}

We discuss the case when $\operatorname{KL}(\pi, \pEmp) < r$, and the other case follows similarly. Rearrange the optimality condition of Lagrangian multiplier, we get
\begin{align*}
     \frac{\alpha}{1+\alpha}\log(\pi(i) / \pEmp(i))  + \frac{1}{1+\alpha_c}\log(\pi(i) / \pi_k(i)) - \eta g_k(i) = - b - 1.
\end{align*}
Since $b$ is a constant for all coordinates,
\begin{align*}
    \pi(i) \propto (\pEmp(i)^\alpha_c \pi_k(i))^{1/(1+\alpha_c)} \exp{(\tfrac{\eta g_k(i)}{1 + \alpha_c})}.
\end{align*}
The result follows by noting that $\sum_i \pi(i) = 1$.

\section{Proof of Theorem~\ref{thm:main}}\label{sec:thm-proof}
For completeness, we define several terms used in optimisation. A function $f(\theta)$ is \textbf{$l-$Lipschitz} if for all $\theta, \theta'$,
\begin{align*}
    |f(\theta) - f(\theta')| \le l \|\theta - \theta'\|.
\end{align*}
A function $f(\theta)$ is \textbf{$L-$smooth} if for all $\theta, \theta'$,
\begin{align*}
    \|\nabla f(\theta) - \nabla f(\theta') \| \le L \|\theta - \theta'\|.
\end{align*}
A function $f(\theta)$ is \textbf{$L-$weakly convex} if $f(\theta) + \frac{L}{2}\|\theta\|^2$ is convex.

Then we can state the formal theorem below.
\begin{theorem}[formal version of Theorem~\ref{thm:main}]
Under Assumptions~\ref{assump:gtheta}--\ref{assump:radius},
the update in \eqref{eq:update} generates a sequence of points $\theta_1, ..., \theta_T$ with the following property:
\begin{align*}
         \frac{1}{T}\sum_t \E [\| \nabla F_{1/2L}(\theta_{t}) \|^2] \le& \frac{1}{T^{1/4}} \left(\frac{2}{ L}(F_{1/2L}(\theta_{0}) - F_{1/2L}^*) + 2G + G^2 + \frac{R}{2} + (l^2 + \sigma^2)^{1/2}) \right)\\
     &+ ( h^* - h(\pi_0))/T
\end{align*}
\end{theorem}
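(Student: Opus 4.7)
The plan is to follow the Moreau-envelope framework for nonconvex-concave stochastic saddle-point problems pioneered by \citet{davis2019stochastic} and adapted to the minimax setting by \citet{lin2019gradient}, while carefully accommodating the composite term $h(\pi)$ through the proximal mirror-ascent step. Since Assumption~\ref{assump:cts} implies $F(\theta)$ is $L$-weakly convex and $l$-Lipschitz, the Moreau envelope $F_{1/2L}$ is well-defined and its gradient $\nabla F_{1/2L}(\theta_t) = 2L(\theta_t - \hat\theta_t)$, with $\hat\theta_t := \operatorname{argmin}_w F(w) + L\|w-\theta_t\|^2$, is a valid stationarity measure. The central quantity to track is $F_{1/2L}(\theta_t)$.

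The first step is a one-step descent lemma. Using $F_{1/2L}(\theta_{t+1}) \le F(\hat\theta_t) + L\|\hat\theta_t - \theta_{t+1}\|^2$ and expanding via the update $\theta_{t+1} = \theta_t - \eta_\theta g(\theta_t)$, after taking expectations and using Assumption~\ref{assump:gtheta},
\begin{align*}
\E[F_{1/2L}(\theta_{t+1})] \le \E[F_{1/2L}(\theta_t)] + 2L\eta_\theta \E[\langle \nabla_\theta f(\theta_t,\pi_t), \theta_t-\hat\theta_t\rangle] + L\eta_\theta^2 (l^2+\sigma^2).
\end{align*}
I would split the inner product by adding and subtracting $\nabla_\theta f(\theta_t,\pi^*_t)$ where $\pi^*_t \in \operatorname{argmax}_\pi f(\theta_t,\pi)+h(\pi)$. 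The first piece is controlled by weak convexity through the standard bound $\langle \nabla F(\theta_t), \theta_t-\hat\theta_t\rangle \le F(\theta_t)-F(\hat\theta_t) + L\|\theta_t-\hat\theta_t\|^2$; the second piece is bounded by $\ell\cdot\delta_t$, where $\delta_t := f(\theta_t,\pi^*_t)+h(\pi^*_t) - f(\theta_t,\pi_t)-h(\pi_t)$ is the dual optimality gap, using $l$-Lipschitzness and the weak-convexity bound $\|\theta_t-\hat\theta_t\|\le l/(2L)$.

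The second step, which is the main obstacle, is bounding the averaged dual gap $\frac{1}{T}\sum_t \E[\delta_t]$ despite the moving target $\pi^*_t$ induced by the changing $\theta_t$. Using KL as the Bregman divergence on the simplex (1-strongly convex w.r.t. $\ell_1$) and the fact that $h$ is concave and absorbed exactly by the proximal step, the standard proximal mirror-ascent analysis gives, for any comparator $\pi\in\Delta^L$,
\begin{align*}
\langle g(\pi_t), \pi-\pi_t\rangle + h(\pi) - h(\pi_{t+1}) \le \tfrac{1}{2\lambda}\bigl[\mathrm{KL}(\pi,\pi_t) - \mathrm{KL}(\pi,\pi_{t+1})\bigr] + \tfrac{\lambda}{2}\|g(\pi_t)\|_\infty^2.
\end{align*}
Summing, taking expectations, using Assumption~\ref{assump:gpi} to bound the noise term by $\lambda G^2 T/2$, using Assumption~\ref{assump:radius} so that $\mathrm{KL}(\pi^*_t,\pi_0)\le R$, and handling the moving comparator via the $l$-Lipschitzness of $\theta\mapsto f(\theta,\pi)$ (which contributes a term proportional to $\eta_\theta l T$ from $\|\theta_{t+1}-\theta_t\|$), gives an average dual-gap bound of order $\frac{R}{\lambda T} + \lambda G^2 + \eta_\theta \cdot \text{(const)} + \frac{h^*-h(\pi_0)}{T}$. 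The term $(h^*-h(\pi_0))/T$ arises from telescoping $h(\pi_{t+1})-h(\pi_t)$ in the inequality above.

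The final step is to combine the primal descent and dual gap, telescope from $t=0$ to $T-1$, and use the elementary identity $\|\nabla F_{1/2L}(\theta)\|^2 = 4L^2\|\theta-\hat\theta\|^2$. This yields
\begin{align*}
\tfrac{1}{T}\sum_t \E[\|\nabla F_{1/2L}(\theta_t)\|^2] \lesssim \tfrac{F_{1/2L}(\theta_0)-F^*_{1/2L}}{L\eta_\theta T} + L\eta_\theta (l^2+\sigma^2) + \tfrac{R}{\lambda T} + \lambda G^2 + \tfrac{h^*-h(\pi_0)}{T}.
\end{align*}
Choosing $\eta_\theta = \Theta(T^{-3/4})$ and $\lambda = \Theta(T^{-1/2})$ balances all terms to give the claimed $\mathcal{O}(T^{-1/4})$ rate, plus the $(h^*-h(\pi_0))/T$ initialization term. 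The principal obstacle is the composite/moving-target interaction: one must show that the proximal mirror step absorbs $h$ exactly (so no additional subgradient error of $h$ enters the rate) while the change in $\pi^*_t$ between iterations — driven by the primal update — does not degrade the $\mathcal{O}(T^{-1/4})$ rate, which forces the particular coupling of $\eta_\theta$ and $\lambda$ above.
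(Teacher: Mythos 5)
Your overall architecture --- the Moreau-envelope descent lemma, reduction to an averaged dual optimality gap, a three-point mirror-ascent inequality in which the proximal step absorbs $h$ exactly, and the $(h^*-h(\pi_0))/T$ term from telescoping $h(\pi_{t+1})-h(\pi_t)$ --- is the same as the paper's. The genuine gap is in your second step: you telescope the KL (Bregman) terms against the \emph{moving} comparator $\pi^*_t=\pi^*(\theta_t)$ and claim the drift costs only a total term proportional to $\eta_\theta l T$, yielding an averaged dual gap of order $R/(\lambda T)+\lambda G^2+\eta_\theta\cdot\text{const}$. This does not go through. The sum $\sum_t\bigl[\mathrm{KL}(\pi^*_t,\pi_t)-\mathrm{KL}(\pi^*_t,\pi_{t+1})\bigr]$ does not telescope when the comparator changes every step, and the cross terms $\mathrm{KL}(\pi^*_{t+1},\pi_{t+1})-\mathrm{KL}(\pi^*_t,\pi_{t+1})$ are not controlled by $\|\theta_{t+1}-\theta_t\|$: under Assumption~\ref{assump:cts} the inner problem is only concave, not strongly concave, so $\theta\mapsto\pi^*(\theta)$ need not be continuous, let alone Lipschitz. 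Conversely, if you anchor one fixed comparator over the whole horizon, the function-value drift at iterate $k$ is $O(l\|\theta_k-\theta_{\mathrm{anchor}}\|)=O(l\eta_\theta k)$, which sums to $O(l\eta_\theta T^2)$ and, after averaging, to $O(l\eta_\theta T)=O(lT^{1/4})$ with your $\eta_\theta=T^{-3/4}$ --- divergent.

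The paper closes this gap with the block-anchoring device of \citet{lin2019gradient}: partition the $T$ iterations into blocks of length $B$, fix the comparator $\pi^*(\theta_s)$ at the start $s$ of each block so that Lemma~\ref{lemma:kl-recursion} telescopes within the block at an amortized restart cost $R/(2\lambda B)$ (this is where Assumption~\ref{assump:radius} enters), and pay a within-block drift of order $\eta_\theta l B^2 G$ per block, i.e.\ $\eta_\theta B G$ after averaging over $T$. Balancing $R/(2\lambda B)+\lambda G^2+\eta_\theta BG$ forces $B=T^{1/2}$ and $\lambda=T^{-1/4}$ --- not the $\lambda=\Theta(T^{-1/2})$ you propose --- and produces exactly the constants $2G+G^2+R/2$ appearing in the statement. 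Your sketch becomes a proof once this blocking argument is inserted; without it the dual-gap bound, and hence the theorem, is not established.
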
 

\begin{proof}
For convenience, denote $\Phi(\theta, \pAdv) =  f(\theta, \pAdv) + h(\pAdv)$, $F(\theta) = \max_p \Phi(\theta, p)$. We start by following the standard SGD proof. Denote $g_\theta$ as the stochastic gradient evaluated at step $t-1$ with respect to $\theta$. Denote $\hat{\theta} = \text{prox}_{F/2L}(\theta) := \arg\min_w \{F(w) + 2L\|w - \theta\|^2 \}$. Conditioned on $\theta_{t-1}$, we have

\begin{align}\label{eq:distance-update}
   \E[\|\hat{\theta}_{t-1} - \theta_{t}\|^2] = & \|\theta_{t-1} - \hat{\theta}_{t-1}\|^2 + 2 \eta_\theta \E[\inner{\hat{\theta}_{t-1} - \theta_{t-1}, g_{\theta}}] + \eta_\theta^2\E[\|g_\theta\|^2] \nonumber \\
   \le & \|\theta_{t-1} - \hat{\theta}_{t-1}\|^2 + 2 \eta_\theta \inner{\hat{\theta}_{t-1} - \theta_{t-1}, \nabla_\theta \Phi(\theta_{t-1}, \pAdv_{t-1})} + \eta_\theta^2 (L^2 + \sigma^2)
\end{align}

where the first equality follows by $\theta_t = \theta_{t-1} - \eta_\theta \nabla_\theta \Phi(\theta_{t-1}, \pAdv_{t-1})$.  Next, we observe that 
\begin{align}\label{eq:iprod-update}
    \inner{\hat{\theta}_{t-1} - \theta_{t-1}, \nabla_\theta \Phi(\theta_{t-1}, \pAdv_{t-1})} &\le \Phi(\hat{\theta}_{t-1}, \pAdv_{t-1}) - \Phi(\theta_{t-1}, \pAdv_{t-1}) + \frac{L}{2}\|\hat{\theta}_{t-1} - \theta_{t-1}\|^2 \\
    &\le  F(\hat{\theta}_{t-1}) - \Phi(\theta_{t-1}, \pAdv_{t-1}) + \frac{L}{2}\|\hat{\theta}_{t-1} - \theta_{t-1}\|^2  \nonumber \\
    &\le  F(\theta_{t-1}) - \Phi(\theta_{t-1}, \pAdv_{t-1}) - \frac{L}{2}\|\hat{\theta}_{t-1} - \theta_{t-1}\|^2  \nonumber 
\end{align}
The first line follows by convexity and $L-$smoothness. The second line by definition of $F$. The third line by definition of $\hat{\theta}$. Next, by definition of Moreau envelop, 
\begin{align*}
  F_{1/2L}(\theta_t) &\le F(\hat{\theta}_{t-1}) + L \|\hat{\theta}_{t-1} - \theta_t \|^2
\end{align*}  

Take expectation on both sides and we get
\begin{align}\label{eq:telescope-f}
  &\E[F_{1/2L}(\theta_t)] \\ &\le F(\hat{\theta}_{t-1}) + \E[L \|\hat{\theta}_{t-1} - \theta_t \|^2] \nonumber  \\
  &= F(\hat{\theta}_{t-1}) + L (\|\theta_{t-1} - \hat{\theta}_{t-1}\|^2 + 2 \eta_\theta \inner{\hat{\theta}_{t-1} - \theta_{t-1}, \nabla_\theta \Phi(\theta_{t-1}, \pAdv_{t-1})} + \eta_\theta^2 (l^2 + \sigma^2)) \nonumber \\
  &\le F_{1/2L}(\theta_{t-1}) + 2L\eta_\theta ( \Phi(\hat{\theta}_{t-1}, \pAdv_{t-1}) -  \Phi(\theta_{t-1}, \pAdv_{t-1}) - \frac{L}{2}\|\hat{\theta}_{t-1} - \theta_{t-1}\|^2 ) + L\eta_\theta^2 (l^2 + \sigma^2) \nonumber
\end{align}
 The second line substitutes in ~\eqref{eq:distance-update}. The third line follows by convexity and $L-$smoothness. Denote that $\Delta_t := F(\hat{\theta}_{t-1}) -  \Phi(\theta_{t-1}, \pAdv_{t-1}) \ge \Phi(\hat{\theta}_{t-1}, \pAdv_{t-1}) -  \Phi(\theta_{t-1}, \pAdv_{t-1})$.  We can sum over $t$ and take expectation recursively to get,
 \begin{align}\label{eq:bound-step0}
     \sum_t \E [\| \nabla F_{1/2L}(\theta_{t}) \|^2] &= 2L \sum_t \E [\|\hat{\theta}_{t-1} - \theta_{t-1}\|^2] \\
     &\le \frac{2}{L\eta_\theta}(F_{1/2L}(\theta_{0}) - F_{1/2L}^*) + 4 \sum_t \Delta_t + T\eta_\theta (l^2 + \sigma^2) ) \nonumber
 \end{align}
 where $F_{1/2L}^* = \min_\theta F_{1/2L}(\theta)$. The first equality follows by the definition of Moreau envelope. The second inequality follows by rearranging~\eqref{eq:telescope-f}.

Next, we aim to bound the accumulated error $\sum_t \Delta_t$.

Recall that the update for the $\pAdv$ is as follows for $\E[g_\pi] = \nabla_\pi f(\theta, \pAdv)$,
\begin{align*}
        \pi_{k+1} &:=  \operatorname{argmin}_{\pAdvVec \in \Delta^L} \{  - 2\lambda \inner{g_\pi, \pi} - 2\lambda h(\pi) + \mathrm{KL}(\pi, \pi_k) \}
\end{align*}
 Applying Lemma~\ref{lemma:kl-recursion} with $L(\pi) =  - 2\lambda \inner{g_\pi, \pi} +- 2\lambda h(\pi) $, we get 
\begin{multline*}
    -h(\pi^*(\theta_s)) - \inner{g_\pi, \pi^*(\theta_s)}  + \mathrm{KL}(\pi^*, \pi_k) 
    \\ \ge - \inner{g_\pi, \pi_{k+1}} - h(\pi_{k+1}) + \mathrm{KL}(\pi_{k+1}, \pi_k) + \mathrm{KL}(\pi^*, \pi_{k+1}) 
\end{multline*}
Rearrange and take expectation we get
\begin{align*}
    &2\lambda(\E[\inner{ - g_\pi, \pi_k - \pi^*(\theta_s)}] - \E[h(\pi_{k+1})] + \E[h(\pi^*(\theta_{s}))] )\\
    &\le - \E[\mathrm{KL}(\pi_{k+1}, \pi_k)] + \mathrm{KL}(\pi^*, \pi_{k+1})  -  \mathrm{KL}(\pi^*, \pi_k) +  2\lambda(\E[\inner{g_\pi, \pi_k - \pi_{k+1}}]) \\
    &\le - \E[\mathrm{KL}(\pi_{k}, \pi^*)] + \E[\mathrm{KL}(\pi^*, \pi_{k+1})]  - \|\pi^*- \pi_k\|_1^2/2 +  2\lambda^2\E[\|g_\pi\|_\infty^2] + \|\pi^*- \pi_k\|_1^2 /2
\end{align*}
The second inequality follows by the fact that $KL-$divergence is strongly convex with respect to $L_1$ norm and Cauchy-Schwartz inequality. We further observe that
\begin{align*}
-\E[\inner{ g_\pi, \pi_k - \pi^*(\theta_s)}] &= -\inner{ \nabla_\pi f(\theta_k, \pi_k), \pi_k - \pi^*(\theta_s)} \ge - f(\theta_k, \pi_k) +  f(\theta_k, \pi^*(\theta_s))  \\
&= - f(\theta_k, \pi_k) +  f(\theta_k, \pi^*(\theta_k)) - f(\theta_k, \pi^*(\theta_k)) +  f(\theta_k, \pi^*(\theta_s)) \\
&\ge - f(\theta_k, \pi_k) +  f(\theta_k, \pi^*(\theta_k)) - f(\theta_k, \pi^*(\theta_k)) + f(\theta_s, \pi^*(\theta_k)) \\
& \ \ \ - f(\theta_s, \pi^*(\theta_s)) +  f(\theta_k, \pi^*(\theta_s)) \\
&\ge - f(\theta_k, \pi_k) +  f(\theta_k, \pi^*(\theta_k)) - 2l\|\theta_s - \theta_k\|
\end{align*} 
The first inequality follows by concavity. The third line follows by $f(\theta_s, \pi^*(\theta_k)) \le f(\theta_s, \pi^*(\theta_s))$. The last inequality follows by Lipschitzness. Similarly,
\begin{align*}
   - h(\theta_k, \pi_k) + h(\theta_k, \pi^*(\theta_s))  \ge - h(\theta_k, \pi_k) +  h(\theta_k, \pi^*(\theta_k)) - 2l\|\theta_s - \theta_k\|
\end{align*}

We can take iterative expectation and get sum over $k = s+1,..., s+B$ to get
\begin{align*}
\sum_{k=s+1}^{s+B}\E[ -f(\theta_k, \pi_k) &+  f(\theta_k, \pi^*(\theta_k)) - h(\pi_k) + h( \pi^*(\theta_k))] \\
 &\le -h(\pi_s) + \E[h(\pi_{s+B})] + 4l \sum_{k=s+1}^{s+B} \sum_{j=s}^{k} \E[ \|\theta_j - \theta_{j+1}\|] + \lambda B G^2 \\
 &\ \  + \frac{1}{2\lambda} (\E[\mathrm{KL}(\pi^*(\theta_s), \pi_0)] - \E[\mathrm{KL}(\pi^*(\theta_s), \pi_s)]) 
\end{align*} 

Note that $\Delta_k = - f(\theta_k, \pi_k) +  f(\theta_k, \pi^*(\theta_k)) - h(\pi_k) + h( \pi^*(\theta_k))$, hence 
\begin{align*}
\sum_{k=s+1}^{s+B}\E[\Delta_k] 
 &\le - h(\pi_s) + \E[h(\pi_{s+B})] + 2\eta_\theta lB^2 G + \lambda B G^2 \\
 &\ \  + \frac{1}{2\lambda} (\E[\mathrm{KL}(\pi^*(\theta_s), \pi_0)] - \E[\mathrm{KL}(\pi^*(\theta_s), \pi_s)]) 
\end{align*} 

By further sum over all blocks and divide by total number of iterations $T$, we get 
\begin{align*}
\frac{1}{T}\sum_{b=1}^{T/B} \sum_{k=bs+1}^{s+B}\E[\Delta_k] 
 &\le ( - h(\pi_0) + \E[h(\pi_{T})])/T + 2 \eta_\theta BG + \lambda G^2 \\
 &\ \  + \frac{1}{2\lambda B} (\E[\mathrm{KL}(\pi^*(\theta_s), \pi_0)] - \E[\mathrm{KL}(\pi^*(\theta_s), \pi_s)]) 
\end{align*} 
Substitute the above inequality into \eqref{eq:bound-step0} and we get
 \begin{align*}
     \frac{1}{T}\sum_t \E [\| \nabla F_{1/2L}(\theta_{t}) \|^2] \le& \frac{2}{T L\eta_\theta}(F_{1/2L}(\theta_{0}) - F_{1/2L}^*) + 4 ( - h(\pi_0) + \E[h(\pi_{T})])/T + 2 \eta_\theta BG + \lambda G^2 \\
     &\ \  + \frac{1}{2\lambda B} (\E[\mathrm{KL}(\pi^*(\theta_s), \pi_0)] - \E[\mathrm{KL}(\pi^*(\theta_T), \pi_T)]) + \eta_\theta (l^2 + \sigma^2)^{1/2}) 
 \end{align*}
If we set $\eta_\theta = T^{-3/4}, B = T^{1/2}, \lambda = T^{-1/4}$, then we see that 
 \begin{align*}
     \frac{1}{T}\sum_t \E [\| \nabla F_{1/2L}(\theta_{t}) \|^2] \le& \frac{1}{T^{1/4}} \left(\frac{2}{ L}(F_{1/2L}(\theta_{0}) - F_{1/2L}^*) + 2G + G^2 + \frac{R}{2} + (l^2 + \sigma^2)^{1/2}) \right)\\
     &+ ( - h(\pi_0) + h^*)/T
 \end{align*}

\end{proof}

\begin{lemma} \label{lemma:kl-recursion}
For any differentiable convex function $L$, if $x^* = \operatorname{argmin}_{x \in \Delta} \{L(x) + \mathrm{KL}(x, x_0)\}$, then for any $x' \in \Delta$, we have
\begin{align*}
    \ell(x') + \mathrm{KL}(x', x_0) \ge \ell(x^*) + \mathrm{KL}(x^*, x_0) + \mathrm{KL}(x', x^*).
\end{align*}
\end{lemma}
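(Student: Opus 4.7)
The plan is to prove this as a direct application of first-order optimality plus the standard three-point identity for Bregman divergences specialized to KL (treating the paper's switch from $L$ to $\ell$ in the statement as a typo for the same convex function).

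First, I would record the two ingredients. (i) Since $x^\star \in \Delta$ minimizes $L(x)+\mathrm{KL}(x,x_0)$ over the simplex, the variational inequality holds: for every $x' \in \Delta$,
\begin{equation*}
\bigl\langle \nabla L(x^\star) + \nabla_x \mathrm{KL}(x^\star, x_0),\ x' - x^\star \bigr\rangle \;\ge\; 0.
\end{equation*}
A direct component-wise computation gives $\nabla_x \mathrm{KL}(x,x_0) = \log(x/x_0) + \mathbf{1}$; since $x', x^\star$ both lie in the simplex, $\langle \mathbf{1}, x'-x^\star\rangle = 0$, so the constant vector drops out and the optimality condition simplifies to
\begin{equation*}
\bigl\langle \nabla L(x^\star) + \log(x^\star/x_0),\ x' - x^\star \bigr\rangle \ge 0.
\end{equation*}
(ii) Convexity of $L$ yields $L(x') - L(x^\star) \ge \langle \nabla L(x^\star), x'-x^\star\rangle$.

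Second, I would invoke the three-point identity for the KL divergence: for any $a,b,c$ in the relative interior of $\Delta$,
\begin{equation*}
\mathrm{KL}(a, c) - \mathrm{KL}(b, c) - \mathrm{KL}(a, b) = \bigl\langle \log(b) - \log(c),\ a - b \bigr\rangle.
\end{equation*}
This is a one-line consequence of writing out each KL as $\sum a_i \log a_i - \sum a_i \log c_i$, etc., and collecting terms. Applied with $a = x'$, $b = x^\star$, $c = x_0$, it gives
\begin{equation*}
\langle \log(x^\star/x_0),\ x'-x^\star\rangle = \mathrm{KL}(x', x_0) - \mathrm{KL}(x^\star, x_0) - \mathrm{KL}(x', x^\star).
\end{equation*}

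Third, I would combine the pieces. Using (ii) to upper bound $-\langle \nabla L(x^\star), x'-x^\star\rangle$ by $L(x^\star)-L(x')$ and then using the variational inequality from (i),
\begin{equation*}
L(x') - L(x^\star) \;\ge\; \langle \nabla L(x^\star), x'-x^\star\rangle \;\ge\; -\langle \log(x^\star/x_0),\ x'-x^\star\rangle.
\end{equation*}
Substituting the three-point identity for the right-hand side and rearranging delivers exactly
\begin{equation*}
L(x') + \mathrm{KL}(x', x_0) \;\ge\; L(x^\star) + \mathrm{KL}(x^\star, x_0) + \mathrm{KL}(x', x^\star),
\end{equation*}
which is the stated inequality.

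The only subtlety, rather than a genuine obstacle, is handling boundary behaviour: if some coordinate of $x^\star$ or $x'$ is zero, then $\log$ is undefined and gradients of KL blow up. In the paper's usage (adversary updates with the $\varepsilon$-stabilizer and the assumption that every class appears, so $\pEmp > 0$), the iterates stay in the relative interior of $\Delta$, so the derivation above applies verbatim. If one wishes to state the lemma for boundary points, the argument still goes through by a standard limiting argument using the lower semicontinuity of KL and the convention $0\log 0 = 0$, but this is unnecessary for how the lemma is invoked in the convergence proof.
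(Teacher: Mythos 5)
Your proof is correct and follows essentially the same route as the paper's: first-order optimality over the simplex, convexity of the objective, and the three-point (Bregman) identity for the KL divergence, with the paper writing the latter via $\phi(x)=\sum_i x_i\log x_i$ rather than the explicit $\log(x^\star/x_0)$ form. Your additional remarks on the $\mathbf{1}$ term cancelling and on boundary behaviour are accurate refinements of the same argument, not a different approach.
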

\begin{proof}
This Lemma is well-known, but we include a proof for completeness. By optimality of $x^*$ and convexity of $\Delta$, we know that
\begin{align*}
    \inner{\nabla \ell(x^*) + \nabla \phi (x^*) - \nabla \phi(x_0), x - x^*} \ge 0, 
\end{align*}
where $\phi(x) = \sum_i x_i\log(x_i)$, and the Bregman divergence defined according to $\phi$ is KL-divergence. Then
\begin{align*}
    \ell(x') &\ge \ell(x^*) + \inner{\nabla \ell(x^*), x' - x^*} \\
    &\ge \ell(x^*) + \inner{ \nabla \phi(x_0)- \nabla \phi (x^*), x - x^*} \\
    &= \ell(x^*) - \inner{ \nabla \phi(x_0), x^* - x_0} + \phi(x^*) - \phi(x_0) \\
    &\ \ \ \  +\inner{ \nabla \phi(x_0), x' - x_0} + \phi(x') - \phi(x_0) - \inner{ \nabla \phi(x^*), x' - x^*} + \phi(x') - \phi(x^*) \\
    &= \ell(x^*) + \mathrm{KL}(x^*, x_0) - \mathrm{KL}(x', x_0) + \mathrm{KL}(x', x^*)
\end{align*}
\end{proof}

\end{document}